\documentclass[10pt,twocolumn]{IEEEtran}
\usepackage{amsmath,amssymb,euscript,yfonts,psfrag,latexsym,graphicx}
\usepackage{bbm,color,amstext,wasysym,parskip}
\usepackage{amsthm}
\graphicspath{{./},{../figures/},{./figures/}}
\usepackage{caption}
\usepackage{subcaption}
\usepackage{float}
\usepackage{array}
\usepackage{url}
\usepackage{algorithm}
\usepackage{algorithmic}
\usepackage{xcolor}

\usepackage[normalem]{ulem}

\usepackage{hyperref}

\graphicspath{{./},{../figures/},{./figures/}}

\newtheorem{thm}{Theorem}

\newtheorem{rem}{Remark}
\newtheorem{cor}{Corollary}
\newtheorem{prob}{Problem}

\usepackage{soul}

\newcommand{\bo}{{\mathbf o}}

\newcommand{\bn}{{\mathbf n}}
\newcommand{\by}{{\mathbf y}}

\newcommand{\mN}{{\mathbf N}}

\newcommand{\bK}{{\mathbb K}}
\newcommand{\bN}{{\mathbb N}}


\newcommand{\bB}{{\mathbf B}}

\newcommand{\mR}{{\mathbb R}}

\newcommand{\cL}{{\mathcal L}}

\newcommand{\bx}{{\bf x}}
\def\bC{{\bf C}}
\def\bK{{\bf K}}

\def\bU{{\bf U}}

\def\minwrt[#1]{\underset{#1}{\text{minimize}}}
\def\maxwrt[#1]{\underset{#1}{\text{maximize }}}


\begin{document}
\title{Inference with Aggregate Data: An Optimal Transport Approach}

\author{Rahul Singh, Isabel Haasler, Qinsheng Zhang, Johan Karlsson, and Yongxin Chen
\thanks{This work was supported by the Swedish Research Council (VR), grant 2014-5870, KTH Digital Futures, and the NSF under grant 1901599 and 1942523.
}
\thanks{R. Singh, Q. Zhang and Y. Chen are with the School of Aerospace Engineering,
Georgia Institute of Technology, Atlanta, GA, USA. {\tt\small \{qzhang419,rasingh,yongchen\}@gatech.edu}}
\thanks{I.~Haasler and J.~Karlsson are with the Division of Optimization and Systems Theory, Department of Mathematics, KTH Royal Institute of Technology, Stockholm, Sweden. {\tt\small haasler@kth.se, johan.karlsson@math.kth.se}}
\thanks{R. Singh, I.~Haasler, and Q. Zhang contribute equally to this paper.}}

\maketitle

\begin{abstract}
We consider inference (filtering) problems over probabilistic graphical models with aggregate data generated by a large population of individuals. We propose a new efficient belief propagation type algorithm over tree-structured graphs with polynomial computational complexity as well as a global convergence guarantee. This is in contrast to previous methods that either exhibit prohibitive complexity as the population grows or do not guarantee convergence. Our method is based on optimal transport, or more specifically, multi-marginal optimal transport theory. 
In particular, we consider an inference problem with aggregate observations, that can be seen as a structured multi-marginal optimal transport problem where the cost function decomposes according to the underlying graph.
Consequently, the celebrated Sinkhorn/iterative scaling algorithm for multi-marginal optimal transport can be leveraged together with the standard belief propagation algorithm to establish an efficient inference scheme which we call Sinkhorn belief propagation (SBP).
We further specialize the SBP algorithm to cases associated with hidden Markov models due to their significance in control and estimation.
We demonstrate the performance of our algorithm on applications such as inferring population flow from aggregate observations. We also show that in the special case where the observations are generated by a single individual, our algorithm naturally reduces to the standard belief propagation algorithm.
\end{abstract}

\section{Introduction}\label{sec:intro}
Filtering problems can be more generally posed as inference problems in probabilistic graphical models (PGMs) \cite{WaiJor08} in a large number of applications including robot localization and mapping, object tracking, and control \cite{Thr02,AndMoo12}.
PGMs provide a powerful framework for modeling the dependence and relations between probabilistic quantities and probabilistic inference using PGMs have been widely used in signal processing, computer vision, computational biology, and many other real-world applications~\cite{WaiJor08,KolFri09,BoxTia11}. During the last decades, many inference algorithms have been proposed, among which the belief propagation (BP) algorithms~\cite{Pea88,YedFreWei01,YedFreWei03} have been extremely effective and successful. The standard PGM framework and the associated inference algorithms are suitable for modeling of distinguishable/labeled individuals as in most standard applications. In fact, many standard filtering algorithms such as Kalman filter are instances of the BP algorithm operating in a special graphical model known as hidden Markov model (HMM)~\cite{KscFreLoe01}. The inference problem in HMMs is a filtering problem that estimates dynamically changing unobservable (hidden) states from a sequence of noisy observed data.

Recently, there has been growing interest in applications involving a large population of individuals, e.g., in animal migration and crowd estimation, where data about individuals are not available. Instead, aggregate population-level observation in the form of counts or contingency tables are provided~\cite{Sun75,Mac77,KalLawVol83,SheDie11,LuoXuZhe16}. A distinct feature of this setting is that the individuals are no longer distinguishable to each other. This restriction in the observations could be due to privacy, security, or economic reasons.
For example, in tourist flow analysis, individual trajectories may not be readily accessible, but the number of people in a given area can typically be counted using video surveillance or electronic gates. Similarly, it is much easier to obtain the population sizes of bird migration than tracking the trajectory of each bird. Finally, when studying the spreading of COVID-19, often only aggregate data is available due to privacy issue; normally patient data needs to be anonymized. In this setting with aggregate observations, the traditional inference algorithms such as BP in PGMs which heavily depend on individual observations, are thus not directly applicable.
The development of reliable and efficient aggregate inference algorithms is of great importance and necessity.

The collective graphical model (CGM) introduced by \cite{SheDie11} is a recent framework for  filtering\footnote{We use the terms ``filtering" and ``inference" interchangeably in the paper.} (inference) and learning with aggregate data. 
A CGM is a graphical model that describes the histograms of individuals directly. Using this model, it was proved that the complexity of traditional inference algorithms for exact inference scales at least polynomially as the population grows \cite{SheSunKumDie13}. To circumvent this difficulty, they proposed an approximate maximum a posteriori (MAP) formulation, which approximates the marginal inference solution well, especially when the size of the population is large \cite{SheSunKumDie13}. Under some proper assumptions, the approximate MAP formulation is a convex optimization problem and the problem dimension is independent of the population size. 
To further accelerate the inference, they proposed the non-linear belief propagation (NLBP) algorithm \cite{SunSheKum15}.
The NLBP algorithm is a message-passing type algorithm relying on interactions between graph nodes. 
Despite of its similarity to the BP algorithm \cite{Pea88}, NLBP suffers from instability and lack of convergence. Indeed, no convergence guarantee has been established so far \cite{SunSheKum15}. Bethe-RDA is another existing algorithm for aggregate inference based on regularized dual averaging (RDA) \cite{VilBelSheMcc15}. It is a type of proximal gradient descent algorithm that exhibits convergence guarantee.

The goal of this paper is to establish an efficient and reliable algorithm for filtering with aggregate observations described by the precise distribution of the observation variables. This observation model is different from the one used for NLBP and Bethe-RDA and neither of these two algorithms works for our problems; see Figure \ref{fig:noise_model_b} and the associated explanations in Section \ref{sec:main_result} for more discussions on the observation models. We build on the CGM framework and develop such an aggregate inference algorithm.  
Our algorithm is based on multi-marginal optimal transport  (MOT) theory \cite{GanSwi98,Pas15,BenCarCut15,Nen16,Pas12}, which involves the transport among multiple marginal distributions.
MOT is a generalization of the classical optimal transport (OT) problem \cite{Vil03} of Monge and Kantorovich to find a transport plan from a source distribution to a target one that minimizes the total transport cost. 
Within the MOT framework, the aggregate observations are viewed as fixed given marginal distributions. We show that the aggregate inference problem in CGM reduces to the special case of entropic regularized formulation of MOT with marginals specified by these aggregate observations. 
Thanks to this equivalence, the aggregate inference problem can be solved by the popular Sinkhorn a.k.a., iterative scaling algorithm \cite{Sin64,FraLor89,Cut13,BenCarCut15}. The Sinkhorn algorithm has the advantages of being extremely easy to implement and parallelize, and has global convergence guarantee~\cite{FraLor89,BenCarCut15}. We show that the Sinkhorn algorithm for aggregate inference can be further accelerated by leveraging the underlying graphical structure of the inference problems with aggregate observations; a key projection step in the Sinkhorn algorithm, which could be potentially expensive, can be realized efficiently by standard BP for tree-structured graphical models. This accelerated version of our algorithm is named Sinkhorn belief propagation (SBP). 

SBP exhibits convergence guarantees 
when the underlying graph is a tree. 
The contributions of the paper are summarized as follows.
\begin{itemize}
    \item We discover an equivalent relation between OT theory and inference/filtering problems with aggregate observations;
    \item Based on OT theory and belief propagation, we propose an efficient marginal inference/filtering algorithm with aggregate data that has a global convergence guarantee; 
    \item We study the filtering problem in collective HMMs and establish connections between collective and standard filtering problems;
    \item We demonstrate the performance of our algorithm on applications such as inferring population flow from aggregate observations.
\end{itemize}

\textbf{Related Work:}
Early works on aggregate data focused on the learning of the parameters of the underlying models. For example, \cite{Sun75,Mac77,KalLawVol83} studied the modeling of a single Markov chain by maximizing the aggregate posterior. More recent learning methods from aggregate data include~\cite{PasFuBou12,LuoXuZhe16}. Since the formalism of CGMs by~\cite{SheDie11} there have been multiple works on inference for aggregate data. The complexity of exact inference in CGMs has been investigated in~\cite{SheSunKumDie13} and an approximate MAP formulation has been proposed in the same paper. The non-linear belief propagation algorithm~\cite{SunSheKum15} is a message passing type algorithm for approximate MAP inference in CGMs, but it does not have a convergence guarantee. The learning of a Markov chain within the CGM framework has been presented in~\cite{BerShe16}. On the other hand, the application of OT theory in filtering and estimation problems have been investigated in \cite{CheKar18,HasRinChe19,CheGeoPav15a,TagMeh20}. Another closely related problem is the Schr\"odinger bridge problem \cite{Leo14,CheGeoPav14e,CheConGeoRip19}, which is essentially equivalent to an entropic OT problem. Our work is also closely related to linear/nonlinear filtering \cite{Kal60,Rab89,ItoXio00,AndMoo12,YanMehMey13,LorNvd11}, which is widely studied in the control community. To some degree, our framework can be viewed as an extension of standard filtering theory to the setting with aggregate observations. 

 The rest of the article is organized as follows. In Section~\ref{sec:background}, we briefly discuss related background including PGMs, BP, CGMs, and MOT. We present our main results and the Sinkhorn belief propagation algorithm (Algorithm \ref{alg:is_bp}) in Section~\ref{sec:main_result}. In Section \ref{sec:Filtering_hmm}, we specialize SBP to HMMs and obtain the collective forward-backward algorithm. It is followed by the experimental results in Section~\ref{sec:eval} and conclusion in Section~\ref{sec:conclusion}.

\section{Background}
\label{sec:background}
In this section, we briefly present relevant background on the components of our method which includes probabilistic graphical models, collective graphical models, and multi-marginal optimal transport. 
\subsection{Probabilistic Graphical Models}
\label{subsec:pgm}
Probabilistic graphical models \cite{WaiJor08} are graph-based representations of a collection of random vectors that capture the conditional dependencies between them. A PGM is associated with an underlying graph $G=(V,\,E)$ where $V$ and $E$ denotes the set of vertices and edges respectively. Each node $i\in V$ corresponds to a random variable $X_i$ which can be either discrete or continuous, though we consider the setting with discrete random variables throughout. The random variable at each node takes values from the same\footnote{This is only for the ease of notation. In general, these random variables can take values in different sets.}
finite set $\mathcal{X}$, with cardinality $|\mathcal{X}| = d$. Assuming that the underlying graph is undirected with $J=|V|$ nodes, the probability of the PGM is given by
	\begin{equation}\label{eq:MRF}
		p(\bx) = p(x_1,x_2,\ldots,x_J) = \frac{1}{Z} \prod_{(i,j) \in E} \psi_{ij}(x_i,x_j),
	\end{equation}
where $\bx=\{x_1,\ldots,x_J\}$ is a particular assignment to the corresponding random variables, $\psi_{ij}$ are known as edge potentials and $Z$ is a normalization constant.
The edge potential $\psi_{ij}$ describes the correlation between the random variables $X_i$ and $X_j$. For instance, the joint probability distribution of the four random variables in the PGM in Figure~\ref{fig:pgm_example} factorizes as
\begin{equation*}
    p(x_1,x_2,x_3,x_4) = \frac{1}{Z} \psi_{12}(x_1,x_2) \psi_{23}(x_2,x_3) \psi_{24}(x_2,x_4).
\end{equation*}
\begin{figure}[h]
\centering
\includegraphics[scale=1]{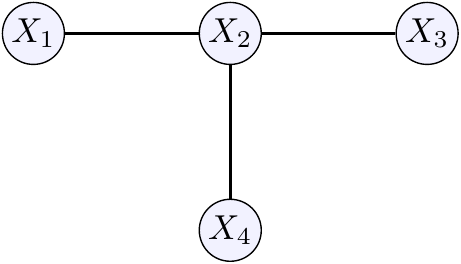}
\caption{An example PGM.}
\label{fig:pgm_example}
\end{figure}
Occasionally, (local) node potentials $\phi_i(x_i)$ induced by, e.g., measurements, are also included in \eqref{eq:MRF} to define the joint probability, but they can always be absorbed into the edge potentials \cite{WaiJor08}. Thus, for the sake of simplicity and without loss of generality, we adopt the probability structure \eqref{eq:MRF}.

Furthermore, we restrict our attention to PGMs with undirected underlying graphs. Indeed, a large class of PGMs are associated with directed graphs, including hidden Markov models (HMMs) which are widely used in the control and estimation community, however these directed models can be converted to undirected ones using a technique known as \textit{moralization}~\cite{KolFri09}. More specifically, the equivalent moralized graph of a directed graph is formed by converting all edges in the graph into undirected ones and adding additional edges between all pairs of non-adjacent nodes with a common child node. As an example, consider the directed graph in Figure~\ref{fig:moralization_a}. It has two non-neighboring nodes $X_1$ and $X_3$ with a common child node $X_2$ and thus we add an additional edge between nodes $X_1$ and $X_3$ in the undirected moralized counterpart depicted in Figure~\ref{fig:moralization_b}.

\begin{figure}[h]
\centering
\begin{subfigure}{.23\textwidth}
\centering
\includegraphics[scale=1]{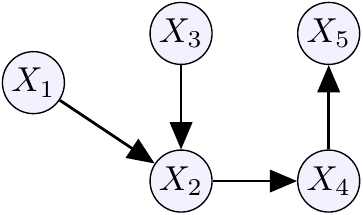}
\caption{Directed}
\label{fig:moralization_a}
\end{subfigure} \hspace*{0.1cm}
\begin{subfigure}{.23\textwidth}
\centering
\includegraphics[scale=1]{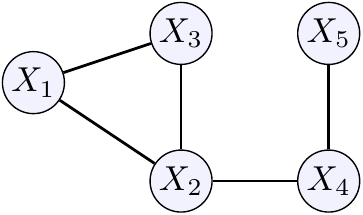}
\caption{Undirected (Moralized)}
\label{fig:moralization_b}
\end{subfigure}
\caption{Moralization of a PGM.}
\label{fig:moralization}
\end{figure}

A fundamental problem in PGMs is to estimate the marginals of each variable from the given joint distribution $p(\bx)$; this is known as the Bayesian inference problem \cite{WaiJor08}. Belief propagation (BP)~\cite{Pea88} is one of the most popular algorithms for accomplishing this task.

{\bf {Belief Propagation}:}
BP is an effective message-passing type algorithm for Bayesian inference in PGMs. It updates the marginal distribution of each node through communications of beliefs/messages between them. These messages are updated through
\begin{equation}\label{eq:BP}
	m_{i \rightarrow j} (x_j) \propto \sum_{x_i} \psi_{ij}(x_i,x_j) 
	\prod_{k\in N(i)\backslash j} m_{k \rightarrow i}(x_i),
\end{equation}
where $m_{i\rightarrow j} (x_j)$ denotes the message from variable node $i$ to variable node $j$, encapsulating the belief of node $i$ on node $j$. Here, $N(i)$ is the set of neighboring nodes of $i$, and thus $N(i)\backslash j$ denotes the set of neighbors of $i$ except for $j$. The messages in \eqref{eq:BP} are updated iteratively over the graph. When the algorithm converges, the node and edge marginals are given by
 	\begin{subequations}\label{eq:belief}
	\begin{eqnarray}
	b_i(x_i) &\propto&  \prod_{k \in N(i)} m_{k \rightarrow i} (x_i)
	\\
	\!\! \!\! \!\! \!\! \!\! \!\! \!\! b_{ij} (x_i,x_j)\!\!\!\!\! &\propto&\!\!\!\!\! \psi_{ij}(x_i,x_j)\!\!\!\!\!\! \prod_{k\in N(i) \backslash j}\!\!\!\!\!\! m_{k\rightarrow i} (x_i) \!\!\!\!\!\!\! \prod_{\ell\in N(j) \backslash i} \!\!\!\!\!\! m_{\ell \rightarrow j} (x_j\!).
	\end{eqnarray}
 	\end{subequations}
When the graph has no cycles (i.e., tree) it is well-known that the belief propagation algorithm converges globally \cite{YedFreWei01} and the estimated marginal distributions in \eqref{eq:belief} recover the true marginals exactly. For general graphs with cycles, convergence is not guaranteed but it works well in practice \cite{WaiJor08}.

Note that many time-sequence based filtering algorithms used in control and estimation, including Kalman filter \cite{Kal60}, are instances of the BP algorithm operating on Gaussian HMM, a special kind of graphical model (see Section~\ref{sec:Filtering_hmm} for more details).

\subsection{Collective Graphical Models}
\label{subsec:cgm}
CGMs were first introduced in \cite{SheDie11} as a framework for inference and learning in graphical models with aggregate data. CGMs describe the distribution of the aggregate counts of a population sampled independently from a discrete PGM. 
Assume all the individuals share the same PGM \eqref{eq:MRF}. 

Let $X^{(m)}_i$ be the random variable representing the state of the $m^{th}$ individual at node $i$. To generate the aggregate data, first assume that $M$ independent sample vectors $\bx^{(1)},...,\bx^{(M)}$  are drawn from the individual probability model to represent the individuals in a population. Here, each entry of the vector $\bx^{(m)}$ corresponding to node $i$ is $x^{(m)}_i$ that takes one of the $d$ possible states.
Let $\bn_i \in \bN^d$ be the aggregate \textit{node} distribution with entries $n_i(x_i)= \sum_{m=1}^M \mathbb{I}[X^{(m)}_i = x_i]$ that count the number of individuals in each state. Here, $\mathbb{I}[\cdot]$ denotes indicator function. Moreover, let $\bn_{ij}\in \bN^{d\times d}$ be the aggregate edge distributions with entries $n_{ij}(x_i,x_j) =  \sum_{m=1}^M \mathbb{I}[X_i^{(m)}= x_i, X_j^{(m)}= x_j]$. The vectors $\bn_1,\ldots,\bn_J$ constitute the aggregate data and the aggregate edge distributions $\bn_{ij}$ represent sufficient statistics of the individual model~\cite{SheDie11}. The collection of all the aggregate node distributions $\bn_i$ together with the aggregate edge distributions $\bn_{ij}$ is denoted as $\bn$, i.e., $\bn=\{\bn_i,\bn_{ij}\}$. 

In CGMs \cite{SheDie11}, the observation noise is modeled explicitly as a conditional distribution $p(\by|\bn)$ with $\by$ being the aggregate noisy observations that probabilistically depend on the aggregate data $\bn$. For instance, for a count $n$, the associated observation may follow a Poisson distribution ${\rm Poisson}(\beta n)$ for some coefficient $\beta>0$. The goal of inference in CGMs is to estimate $\bn$ from the aggregate noisy observations through the conditional distribution $p(\bn | \by) \propto p(\bn)p(\by|\bn)$, where $p(\bn)$ is known as the CGM distribution~\cite{SheDie11} which is derived from the individual model~\eqref{eq:MRF}. When the underlying graph is a tree, the CGM distribution $p(\bn)$ equals
\begin{equation}\label{eq:cgm_distribution}
p(\bn) = M! \frac{\prod_{i\in V} \prod_{x_i}((n_i(x_i)!)^{(d_i - 1)} }{\prod_{(i,j) \in E} \prod_{x_i,x_j} n_{ij}(x_i,x_j)!} p(\bx^{(1)},\ldots, \bx^{(M)}),
\end{equation}
where $d_i$ is the number of neighbors of node $i$ in $G$ and 
\begin{equation*}
    p(\bx^{(1)},\ldots, \bx^{(M)})\! =\! \frac{1}{Z^M}\!\! \prod_{(i,j) \in E} \prod_{x_i,x_j}\! \psi_{ij}(x_i,x_j)^{n_{ij}(x_i,x_j)}
\end{equation*}
is the joint probability of the entire population. The integer coefficient $$M! \frac{\prod_{i\in V} \prod_{x_i}((n_i(x_i)!)^{(d_i - 1)} }{\prod_{(i,j) \in E} \prod_{x_i,x_j} n_{ij}(x_i,x_j)!}$$ accounts for the fact that the individuals in aggregate observation are indistinguishable. 

The support of the CGM distribution $p(\bn)$ is such that each entry of $\bn$ is an integer and $\bn$ satisfies the following constraints
\begin{equation}\label{eq:nconstraints}
\begin{aligned}
    &\sum_{x_i}n_i(x_i) = M,\qquad\qquad ~\forall i \in V \\
    &n_i(x_i) = \sum_{x_j} n_{ij} (x_i,x_j), \quad \forall i \in V,  ~j\in N(i).
    \end{aligned}
\end{equation}

Exact inference of $\bn$, either maximum a posterior probability estimate or Bayesian inference, based on $p(\bn|\by)$ is unrealistic for large populations, since the computational complexity increases rapidly
as the population size $M$ grows \cite{SheSunKumDie13}. It was first discovered in \cite{SheSunKumDie13} that $-\ln p(\bn|\by)$ can be approximated by (up to a constant addition and multiplication) the CGM free energy 
\begin{equation}\label{eq:cgm_free_energy}
    F_{\rm CGM}(\bn) = U_{\rm CGM}(\bn) - H_{\rm CGM}(\bn),
\end{equation}
where $U_{\rm CGM}(\bn)$ equals 
\begin{equation*}
- \sum_{(i,j) \in E} \sum_{x_i,x_j} n_{ij}(x_i,x_j)  \ln \psi_{ij}(x_i,x_j)
    - \ln~p(\by|\bn),
\end{equation*}
and
\begin{equation*}
\begin{aligned}
    H_{\rm CGM}(\bn) = - \sum_{(i,j) \in E} \sum_{x_i,x_j} n_{ij}(x_i,x_j) \ln n_{ij}(x_i,x_j)  \\ + \sum_{i \in V} (d_i - 1) \sum_{x_i} n_i(x_i) ~\ln~n_{i}(x_i).
    \end{aligned}
\end{equation*}
After relaxing the constraints that $n_i(x_i), n_{ij}(x_i,x_j)$ are integers and under the assumption that the observation model $p(\by\mid\bn)$ is log-concave, the resulting problem of minimizing $F_{\rm CGM}$ is a convex optimization problem. This is the approximate MAP \cite{SheSunKumDie13} framework for CGMs. Note that the problem size of minimizing $F_{\rm CGM}$ is independent of the population size $M$. Even though the approximate MAP framework is insensitive to population size, its complexity grows rapidly as the number of variables $J$ increases. Two algorithms designed to solve the approximate MAP problems more efficiently are non-linear belief propagation (NLBP)~\cite{SunSheKum15} and Bethe regularized dual averaging (Bethe-RDA)~\cite{VilBelSheMcc15}.

\textbf{Non-linear Belief Propagation:} The NLBP~\cite{SunSheKum15} algorithm addresses the aggregate inference problem by establishing a connection between the Bethe free energy~\cite{YedFreWei05} and the objective function \eqref{eq:cgm_free_energy} for approximate MAP inference in CGMs. 
It is a message passing type algorithm for aggregate MAP inference. Roughly, it is equivalent to running standard BP on a PGM with edge potentials $\hat \psi_{ij}$ which are being updated at each iteration. The steps of the NLBP algorithm are listed in Algorithm~\ref{alg:nlbp}. 
\begin{algorithm}[h]
   \caption{Non-Linear Belief Propagation (NLBP)}
   \label{alg:nlbp}
\begin{algorithmic}
   \STATE Initialize $n_{ij}(x_i,x_j) \propto \psi_{ij}(x_i,x_j),~\forall (i,j) \in E, \forall x_i,x_j $
   \REPEAT
   \STATE Update the following in any order for all $(i,j)\in E$
           \STATE 
           \begin{eqnarray*}
           \hat{\psi}_{ij}(x_i,x_j) \!\!\!&=&\!\!\! \text{exp} \left( -  \frac{\partial U_{\rm CGM}(\bn)}{\partial n_{ij}(x_i,x_j)} \right)
            \\
            m_{i\rightarrow j}(x_j) \!\!\!&\!\!\!\propto& \sum_{x_i} \hat{\psi}_{ij}(x_i,x_j) \prod_{k \in N(i) \backslash j}m_{k \rightarrow i}(x_i)
            \\
            n_{ij}(x_i,x_j) \!\!\!&\propto&\!\!\! \hat{\psi}_{ij}(x_i,x_j) \!\!\!\!\!\! \prod_{k \in N(i) \backslash j} \!\!\!\! \!\!\! m_{k \rightarrow i}(x_i) \!\!\!\!\! \prod_{\ell \in N(j) \backslash i} \!\!\!\! \!\!\! m_{\ell \rightarrow j}(x_j)
            \end{eqnarray*}
    \UNTIL{convergence}
\end{algorithmic}
\end{algorithm}
Similar to BP, after convergence of the messages in Algorithm~\ref{alg:nlbp}, the aggregate marginals can be estimated as \begin{equation}\label{eq:nlbp_marginal}
n_i(x_i) \propto \prod_{k \in N(i)}m_{k \rightarrow i}(x_i).
\end{equation}

One of the major drawbacks of the NLBP algorithm is that it does not exhibit any convergence guarantee \cite{SunSheKum15}. The major factor affecting the convergence of NLBP is the update of the potentials $\hat{\psi}_{ij}$ which requires gradient computations of $p(\by|\bn)$. These gradients depend on the observation model at hand and might cause the explosion or saturation of potential updates based on the smoothness of the observation model $p(\by|\bn)$. To stabilize the NLBP to some degree, it was proposed \cite{SunSheKum15} to dampen the estimates $\bn$ as $\bn = (1-\alpha)\bn + \alpha \bn^{new}$ in each iteration, where $0< \alpha \leq 1$ and $\bn^{new}$ is the estimate in the current iteration. However, the selection of the parameter $\alpha$ has to be done carefully to ensure appropriate potential updates \cite{SunSheKum15}. 


{\bf Bethe regularized dual averaging:} 
Another algorithm for solving the aggregate inference problem is Bethe-RDA~\cite{VilBelSheMcc15} which is inspired by one type of proximal algorithms called regularized dual averaging (RDA)~\cite{Xia09}. The Bethe-RDA algorithm has been proven to be faster than NLBP by an order of magnitude in various experiments~\cite{VilBelSheMcc15}. Similar to NLBP, in Bethe-RDA, standard BP is used to compute the marginals at each iteration based on modified edge potentials. However, the way to update these potentials in Bethe-RDA is very different. At iteration $t$, the edge potentials $\Psi_t = \{ \psi_{ij}^t(x_i,x_j) \}$ are updated according to 
\begin{equation}
    \ln~ \Psi_t = \ln~ \Psi - \frac{t}{\beta_t + t} ~ \overline{g}_t,
\end{equation}
where $\Psi = \{ \psi_{ij}(x_i,x_j)  \}$ is the original potential, $\beta_t$ is the learning rate, and 
\begin{equation}
    \overline{g}_t = \frac{t-1}{t} ~ \overline{g}_{t-1} - \frac{1}{t}~ \frac{\partial \ln p(\by|\bn_{t-1})}{\partial \bn}.
\end{equation}
After updating the potentials, the required marginals $\bn_{t}$ are computed according to the standard BP algorithm. The steps of the Bethe-RDA algorithm are listed in Algorithm~\ref{alg:bethe_rda}.
\begin{algorithm}[h]
   \caption{Bethe-RDA}
   \label{alg:bethe_rda}
\begin{algorithmic}
   \STATE Initialize all the marginals $\bn_0 = \text{BP}[\Psi]$, and $\bar g_0 = 0$
   \REPEAT
   \vspace{0.1cm}
   \STATE $\overline{g}_t = \frac{t-1}{t} ~ \overline{g}_{t-1} - \frac{1}{t}~ \frac{\partial \ln p(\by|\bn_{t-1})}{\partial \bn}$
   \vspace{0.2cm}
   \STATE $\ln~ \Psi_t = \ln~ \Psi - \frac{t}{\beta_t + t} ~ \overline{g}_t $
   \vspace{0.2cm}
   \STATE $\bn_{t} = \text{BP}[\Psi_t]$
   \vspace{0.1cm}
    \UNTIL{convergence}
\end{algorithmic}
\end{algorithm}

\subsection{Multimarginal Optimal Transport}
\label{subsec:MOT}
In a MOT \cite{Nen16,Pas12} problem one aims to find an optimal transport plan among a set of marginal distributions, minimizing an underlying given cost function. We consider the discrete settings where the marginal distributions are described by probability vectors $\mu_j \in \mathbb{R}^d,~j \in \Gamma \subset \{1,2,\ldots,J\}$ and denote the cost function and the transport plan by the $J$-mode tensors $\bC,\bB\in\mR^{d\times d \dots \times d}$. The Kantorovich formulation of MOT with constraints on a subset of marginals $\Gamma \subset \{1,2,\dots,J\}$ reads \cite{ElvHaaJakKar20}

\begin{equation} \label{eq:omt_multi_discrete}
\begin{aligned}
\min_{\bB \in \mR_+^{d\times \dots \times d}} ~&~ \langle \bC, \bB \rangle  \\
\text{subject to}~~ & P_j (\bB) = \mu_j,  \text { for } j \in \Gamma,
\end{aligned}
\end{equation}
where $\langle \bC, \bB \rangle = \sum_{i_1,\dots,i_J} \bC_{i_1,\dots,i_J} \bB_{i_1,\dots,i_J}$, and the projection on the $j$-th marginal of $\bB$ is defined by
\begin{equation} \label{eq:proj_discrete}
P_j(\bB) = \sum_{i_1,\dots,i_{j-1},i_{j+1},i_J} \bB_{i_1,\dots,i_{j-1},i_j,i_{j+1},\dots,i_J}.
\end{equation}

Note that the standard OT problem with two marginals is a special case of \eqref{eq:omt_multi_discrete} with $J=2$ and $\Gamma=\{1,2\}$. 

Though \eqref{eq:omt_multi_discrete} is a standard linear programming, it can be computational expensive for large $d$. For faster computations, it was proposed by \cite{Cut13,BenCarCut15,CheGeoPav16} to add a regularizing entropy term
\begin{equation}
H(\bB) = - \sum_{i_1,\dots,i_J} \bB_{i_1,\dots,i_J}  \ln~(\bB_{i_1,\dots,i_J})
\end{equation}
to the objective.
This results in the strongly convex problem
\begin{equation} \label{eq:omt_multi_regularized}
\begin{aligned}
\min_{\bB \in \mR_+^{d\times \dots \times d} }~ &~ \langle \bC, \bB \rangle - \epsilon H(\bB) \\
\text{subject to}~~ & P_j (\bB) = \mu_j,  \text { for } j \in \Gamma,
\end{aligned}
\end{equation}
where $\epsilon>0$ is a regularization parameter. 

It can be shown that the unique optimal solution to \eqref{eq:omt_variational} is of the form
\begin{equation}
\bB = \bK \odot \bU,
\end{equation}
where $\bK = \exp(- \bC/\epsilon)$ and $\bU= u_1 \otimes u_2 \otimes \dots \otimes u_J$ with $u_j = \mathbf{1}$ ($\mathbf{1}$ denotes the vector of all entries being $1$) for $j\notin \Gamma$. Here $\otimes$ denotes tensor product and $\exp$ is entry-wise exponential map. The Sinkhorn scheme \cite{DemSte40,Sin64,Cut13} for finding the vectors $u_j$, given that they are initialized to be $\mathbf{1}$,
is to iteratively update them according to
\begin{equation} \label{eq:sinkhorn_multi}
u_j \leftarrow u_j \odot \mu_j ./ P_j(\bK \odot \bU),
\end{equation}
for all $j\in\Gamma$. Here $\odot$ and $./$ denote entry-wise multiplication and division, respectively. It is worth noting that the update step \eqref{eq:sinkhorn_multi} in Algorithm \ref{alg:sinkhorn} is a \textit{scaling} step that ensures that the $j$-th marginal of the updated tensor $ \bK \odot \bU$ satisfies the constraint in \eqref{eq:omt_multi_regularized}, i.e., $ P_j (\bK \odot \bU) = \mu_j$. The steps of Sinkhorn scheme are summarized in Algorithm~\ref{alg:sinkhorn} for future reference. 
\begin{algorithm}[h]
   \caption{Sinkhorn Algorithm for MOT}
   \label{alg:sinkhorn}
\begin{algorithmic}
    \STATE Compute $\bK=\exp(- \bC/\epsilon)$
   \STATE Initialize $u_1, u_2, \ldots, u_J$ to $\mathbf{1}$
   \REPEAT
   \FOR{$j \in \Gamma$}
        \STATE $\bU \leftarrow u_1 \otimes u_2 \otimes \dots \otimes u_J$
        \STATE
        $u_j \leftarrow u_j \odot \mu_j ./ P_j(\bK \odot \bU)$
    \ENDFOR
    \UNTIL{convergence}
\end{algorithmic}
\end{algorithm}
The Sinkhorn algorithm may for instance be derived as Bregman iterations \cite{BenCarCut15} or dual block coordinate ascend \cite{ElvHaaJakKar20}. It has global convergent guarantee with linear rate \cite{LuoTse92,HaaRinCheKar20}. We remark that even though Sinkhorn algorithm has linear convergent rate, the cost for each iteration could be high due to the projection $P_j$, whose complexity scales exponentially with $J$.


\section{Inference with aggregate data}
\label{sec:main_result}
We consider Bayesian marginal inference problems with aggregate data as in CGMs with a different observation model. We reformulate them into MOT problems and then leverage Sinkhorn algorithm (Algorithm \ref{alg:sinkhorn}) to develop an efficient algorithm for our problems.

\subsection{Problem formulation}
Assume that the graph $G = (V,E)$ encodes the relationships among the node variables  $X_1,X_2,\ldots,X_J$ of each individual, which consists of unobserved as well as observed variables, and let $\Gamma\subset V$ be the set of observation nodes. Let the unobserved individual variables take values in a finite set $\mathcal{X}_u$ and the observations come from another finite set $\mathcal{X}_o$, where in general, $\mathcal{X}_u \neq \mathcal{X}_o$. The joint distribution of individual variables is assumed to be factored as \eqref{eq:MRF}.

Then similar to the generative model of aggregate data in CGMs, by drawing $M$ independent samples from the individual model, the aggregate counts corresponding to each node is generated. Therefore, the aggregate data constitute $\bn_1, \bn_2, \ldots, \bn_J$ (here $J=|V|$). We assume that the system is closed~\cite{KalLawVol83}, i.e., the population size $M$ remains fixed. In such closed settings, the aggregate data can be thought of as probability distributions when \textbf{normalized} with the population size. 
With this setup, when the underlying graph structure is a {\em tree}, the aggregate variables have the same graph structure as of the individual probability model; this is due to the hyper-Markov property~\cite{SheSunKumDie13}. Now, we have the aggregate distributions constituting $\bn_1,\bn_2,\ldots, \bn_J$ with the underlying structure $G$. Suppose aggregate observations are made from a subset of nodes $\Gamma \subset V$ and denote these aggregate observation by $\by_i, \forall i \in \Gamma$, then our goal is to infer the aggregate marginals $\bn_i, ~\forall i \notin \Gamma$.
\begin{rem}
Without loss of generality, we assume that the observation nodes can only be leaves, otherwise, we can always split the underlying tree-structured graph over the observation node and then each subgraph will have this observation node as a leaf.
\end{rem}
\begin{figure}[tb]
\centering
\begin{subfigure}{.23\textwidth}
\centering
\includegraphics[scale=0.63]{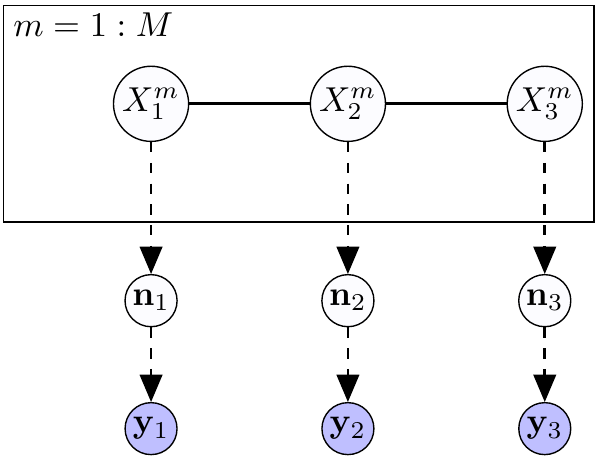}
\caption{} \label{fig:noise_model_a}
\end{subfigure} \hspace*{0.1cm}
\begin{subfigure}{.23\textwidth}
\centering
\includegraphics[scale=0.63]{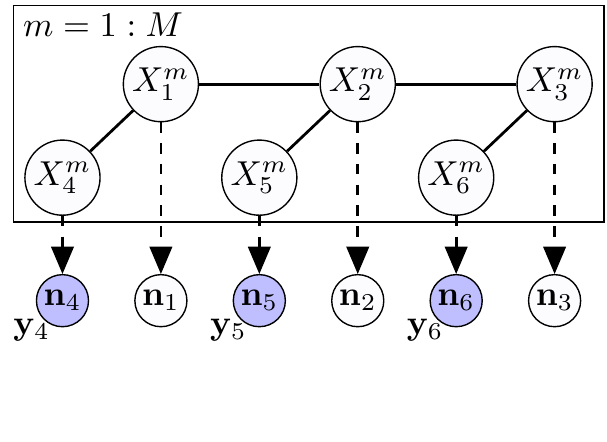}
\caption{} \label{fig:noise_model_b}
\end{subfigure}
\caption{Different aggregate observation models (shaded nodes represent aggregate observations): (a) CGMs model the aggregate noisy observations explicitly and (b) in our model, the observation noise is encoded in the underlying graphical model.}
\label{fig:noise_model}
\vspace{-11pt}
\end{figure}
Note that in our setting, the observation model is a subset of the underlying graph as opposed to the original CGM setting, where the observation model is treated separately. Figure~\ref{fig:noise_model} depicts this difference between the noise models. In \cite{SheSunKumDie13}, the observation model we use here was regarded as exact observation since the aggregate measurements are exact marginal distributions of the associated node variable. We argue that this type of observation can also handle measurement noise. Taking Figure~\ref{fig:noise_model} as an example, $X_4$ is treated as a measurement node of $X_1$, therefore, the measurement noise is already encoded in the edge potential between them. Indeed, this is the measurement noise model used in standard HMMs \cite{FinSinTis98}; the measurement noise is captured by the emission probability, which is the edge potential between a hidden state node and an observation node (see Section \ref{sec:Filtering_hmm} for more details).
As a side note, the algorithms developed in \cite{SheSunKumDie13,SunSheKum15,VilBelSheMcc15} do not apply to the cases with ``exact observation''. Taking the limit of those algorithms designed for ``noisy observation'' with vanishing noise will cause ill-conditioning issues in the updates of the algorithms. 

This can be found by maximizing the posterior distribution $p(\by|\bn) \propto p(\bn,\by)$, which is equivalent to minimizing the negative logarithm $-\ln p(\bn, \by)$. Since we model the observation variables within the graphical model as in Figure~\ref{fig:noise_model_b}, the observation noise is implicitly incorporated in the CGM distribution given by Equation~\eqref{eq:cgm_distribution}. 




Using Equation~\eqref{eq:cgm_distribution}, we arrive at the following
\begin{align*}
   - \ln~p(\bn,\by) =& - \ln~M! - \sum_{i \in V} (d_i - 1) \sum_{x_i} \ln ( n_i(x_i)!) \\
    & + \sum_{(i,j)\in E} \sum_{x_i,x_j} \ln ( n_{ij}(x_i,x_j) !)  +  M \ln~Z \\
    & - \sum_{(i,j)\in E} \sum_{x_i,x_j} n_{ij}(x_i,x_j) \ln~\psi_{ij}(x_i,x_j).
\end{align*}
Again, as in CGM, it is computationally intractable to directly minimize the negative logarithm $- \ln~p(\bn,\by)$ due to the presence of factorial terms and $\bn$ being integers. By invoking the Stirling approximation $\ln( a!)=  a~\ln~a - a + \mathcal{O}(\ln~a)$ as in \cite{SheSunKumDie13} (see also \cite[Prop.~1]{HasRinChe19}), we obtain
\begin{align*}
   - \ln~p(\bn,\by) =&  - \sum_{i \in V} (d_i - 1) \sum_{x_i} n_i(x_i)  ~\ln ~n_i(x_i) \\
    & + \sum_{(i,j)\in E} \sum_{x_i,x_j} n_{ij}(x_i,x_j) ~\ln~n_{ij}(x_i,x_j) \\
    & - \sum_{(i,j)\in E} \sum_{x_i,x_j} n_{ij}(x_i,x_j) \ln( \psi_{ij}(x_i,x_j)) \\
    & - M \ln(M/Z)+ \mathcal{O}( \ln M).
\end{align*}

Denote $\hat \bn_i = \bn_i/M$ and $\hat \bn_{ij} = \bn_{ij}/M$ the normalizations of the aggregate distributions, then it follows
\begin{align*}
   - \frac{1}{M} \ln~p(\bn,\by)  = &  - \sum_{i \in V} (d_i - 1) \sum_{x_i} \hat n_i(x_i)  ~\ln ~\hat n_i(x_i) \\
    & \hspace{-1cm}+ \sum_{(i,j)\in E} \sum_{x_i,x_j} \hat n_{ij}(x_i,x_j) ~\ln~\hat n_{ij}(x_i,x_j)+\ln Z \\
    & \hspace{-2cm} - \sum_{(i,j)\in E} \sum_{x_i,x_j} \hat n_{ij}(x_i,x_j) \ln( \psi_{ij}(x_i,x_j)) + \mathcal{O}(\frac1M\ln M).
\end{align*}
Thus, when $M$ is large, the (scaled) log-likelihood converges to a quantity depending only on the normalized marginals $\hat\bn_i, \hat \bn_{ij}$. For the sake of simplicity, from now on, we use the $\bn_i, \bn_{ij}$ instead of $\hat\bn_i, \hat \bn_{ij}$ to denote the normalized marginal distributions. The aggregate observations $\by_i, ~\forall i \in \Gamma$ are also normalized so that $\sum_{x_i} y_i(x_i) = 1$. We further denote their joint distributions by $\mN$. Thanks to the tree structure of the underlying graph, minimizing the above is equivalent to the following problem.
\begin{prob}\label{prob:variational}
	\begin{subequations}\label{eq:variational}
	\begin{eqnarray}\label{eq:variational1}
    		\min_{\mN} && {\rm KL}(\mN~ ||~ \prod_{(i,j)\in E} \psi_{ij}(x_i,x_j))
		\\ \mbox{ subject to} &&P_j(\mN) = \by_j, \quad \forall j \in \Gamma,  \label{eq:variational2}
	\end{eqnarray}
	\end{subequations}
where $P_j$ denotes projection operation, i.e., $\bn_j = P_j(\mN)$.
\end{prob}
Problem \ref{prob:variational} is similar to the variational inference formulation in standard PGM~\cite{YedFreWei05} except for the existence of the extra constraints \eqref{eq:variational2}. Such a constrained version of variational inference has indeed been studied in \cite{TehWel02}, however, from a very different perspective. Problem \ref{prob:variational} provides a new viewpoint for our CGM inference problem: finding a joint aggregate distribution $\mN$ that is closest to the prior model of the PGM while satisfying the marginal constraints \eqref{eq:variational2}. More rigorously, the equivalence between Problem \ref{prob:variational} and the problem of finding the most likely $\mN$ is a consequence of the theory of large deviation \cite{Var84}. A special case of Problem \ref{prob:variational} that has been widely studied is the Schr\"odinger bridge problem \cite{Leo14,CheGeoPav14e}, which corresponds to the choice $J=2$ and $\Gamma = \{1,2\}$. Thus, Problem \ref{prob:variational} can also be viewed as a multi-marginal generalization of the Schr\"odinger bridge problems. 



\subsection{Multimarginal Optimal Transport Approach}
\label{subsec:cgm_MOT}
When treating the pairwise potentials of the graphical model as the local components of the cost function in MOT, i.e.,
\begin{equation}\label{eq:cost_structure}
    \bC(\bx) = -\sum_{i,j}\ln \psi_{ij} (x_i,x_j), 
\end{equation}
Problem \eqref{eq:variational} can be viewed as a regularized MOT problem. Indeed, the regularized MOT problem \eqref{eq:omt_multi_regularized} can be rewritten as 
\begin{equation}
\label{eq:omt_variational}
\begin{aligned}
    & \underset{\bB \in \mR_+^{d\times \dots \times d}}{\text{min}} && \text{KL} \left( \bB~ ||~ \text{exp} \left( - \frac{\mathbf{C}}{\epsilon} \right) \right) \\
    & \text{subject to} && P_j(\bB) = \mu_j,\quad \forall j \in \Gamma.
\end{aligned}
\end{equation}
Plugging \eqref{eq:cost_structure} into the above and taking $\epsilon=1$ yields exactly the same expression as in \eqref{eq:variational1}.

Consequently, we can adopt the Sinkhorn algorithm (Algorithm~\ref{alg:sinkhorn}) to solve the MOT Problem~\ref{prob:variational}, and 
this is guaranteed to converge~\cite{FraLor89}. Thanks to the graphical structure \eqref{eq:cost_structure} of the cost $\bC$, we can further accelerate the algorithm by utilizing standard BP to realize the key projection step $P_j(\bK \odot \bU)$ in the Sinkhorn algorithm; we call this combination of Sinkhorn and BP the Sinkhorn belief propagation (SBP) algorithm. 

Before presenting our SBP algorithm, we first characterize the stationary points of the optimization \eqref{eq:variational} in terms of local messages; these will be used to compute the projections in a Sinkhorn scaling step. When the underlying graph is a tree, the objective function of \eqref{eq:variational} is the same as the Bethe free energy~\cite{YedFreWei05}
\begin{equation}\label{eq:Bethe_energy_MOT}
\begin{aligned}
    F_{\rm Bethe}(\bn) =  \sum_{(i,j) \in E} \sum_{x_i,x_j} n_{ij}(x_i,x_j) \ln \frac{n_{ij}(x_i,x_j)}{\psi_{ij}(x_i,x_j)} \\
    - \sum_{i} (d_i - 1) \sum_{x_i} n_i(x_i) \ln n_i(x_i).
    \end{aligned}
    \end{equation}
Thus, Problem \ref{prob:variational} 
takes the form
\begin{subequations}\label{eq:MOT_bethe}
\begin{eqnarray}
    \min_{\bn_{ij},\bn_i}\quad  &&F_{\rm Bethe}(\bn) \label{eq:MOT_bethe_a}
    \\
   \text{subject to} && n_i(x_i) = y_i(x_i),~ \forall i \in \Gamma \label{eq:MOT_bethe_b} \\
   && \sum_{x_j} n_{ij}(x_i,x_j) = n_i(x_i), \forall (i,j) \in E \label{eq:MOT_bethe_c}  \\
   && \sum_{x_i} n_i(x_i)= 1, ~ \forall i \in V.\label{eq:MOT_bethe_d}
\end{eqnarray}
\end{subequations}
Here \eqref{eq:MOT_bethe_b} corresponds to aggregate observation constraints and \eqref{eq:MOT_bethe_c}-\eqref{eq:MOT_bethe_d} represent consistency constraints.
One can apply Lagrangian duality theory \cite{BoyVan04} to the constrained convex optimization \eqref{eq:MOT_bethe} to obtain the following. The proof is deferred to the Appendix. 
\begin{thm} \label{thm:is_bp}
The solution to the aggregate inference problem~\eqref{eq:MOT_bethe} is characterized by 
\begin{equation}\label{eq:nmarginal}
    n_i (x_i)  \propto  \prod_{k \in N(i)}  m_{k \rightarrow i}(x_i), ~\forall i\notin \Gamma
\end{equation}
where $m_{i \rightarrow j}(x_j)$ are fixed points of
\begin{subequations}\label{eq:is_bp_MOT}
\begin{eqnarray}
    m_{i\rightarrow j} (x_j)  &\propto&  \sum_{x_i} \psi_{ij}(x_i,x_j) \prod_{k\in N(i)\backslash j}  m_{k\rightarrow i}(x_i); \nonumber \\
    &&\forall i \notin \Gamma,~  \forall j \in N(i), \label{eq:is_bp_MOT1}  \\
    m_{i\rightarrow j} (x_j) &\propto&  \sum_{x_i} \psi_{ij}(x_i,x_j) \frac{y_i(x_i)}{m_{j \rightarrow i} (x_i)}; \nonumber \\
    && \forall i \in \Gamma,~  \forall j \in N(i).  \label{eq:is_bp_MOT2}
\end{eqnarray}
\end{subequations}
\end{thm}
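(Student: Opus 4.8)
The plan is to treat \eqref{eq:MOT_bethe} as a convex program and read off its stationarity (KKT) conditions via Lagrangian duality, then identify the Lagrange multipliers of the marginalization constraints with the logarithms of the BP messages. This is the classical Bethe-free-energy/belief-propagation correspondence of Yedidia--Freeman--Weiss, adapted to accommodate the extra observation constraints \eqref{eq:MOT_bethe_b}. First I would attach multipliers to each constraint: a vector $\mu_{ij}(\cdot)$ for each directed marginalization constraint $n_i(x_i)=\sum_{x_j}n_{ij}(x_i,x_j)$ of edge $(i,j)$, a scalar $\gamma_i$ for each normalization constraint \eqref{eq:MOT_bethe_d}, and a vector $\lambda_i(\cdot)$ for each observation constraint \eqref{eq:MOT_bethe_b}. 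Because $F_{\rm Bethe}$ is convex on a tree and the constraints are affine, any stationary point of the Lagrangian is the global minimizer, so it suffices to solve the stationarity equations.

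Next I would differentiate. Stationarity in the edge variable $n_{ij}(x_i,x_j)$ gives $\ln\frac{n_{ij}}{\psi_{ij}}+1=\mu_{ij}(x_i)+\mu_{ji}(x_j)$, hence $n_{ij}(x_i,x_j)\propto\psi_{ij}(x_i,x_j)\,e^{\mu_{ij}(x_i)}e^{\mu_{ji}(x_j)}$. This motivates defining the message $m_{i\to j}(x_j)\propto\sum_{x_i}\psi_{ij}(x_i,x_j)e^{\mu_{ij}(x_i)}$, so that marginalizing the edge belief yields $n_j(x_j)\propto e^{\mu_{ji}(x_j)}m_{i\to j}(x_j)$ along every edge. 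For an interior node $i\notin\Gamma$, stationarity in $n_i(x_i)$ gives $-(d_i-1)(\ln n_i(x_i)+1)+\sum_{j\in N(i)}\mu_{ij}(x_i)+\gamma_i=0$. Combining this with the $d_i$ edge relations $e^{\mu_{ij}(x_i)}\propto n_i(x_i)/m_{j\to i}(x_i)$ is where the counting factor $(d_i-1)$ has to be handled: summing the edge relations over $j\in N(i)$ and substituting into the node stationarity collapses the $\tfrac{1}{d_i-1}$ into the clean product rule $n_i(x_i)\propto\prod_{k\in N(i)}m_{k\to i}(x_i)$, which is \eqref{eq:nmarginal}, and back-substitution yields $e^{\mu_{ij}(x_i)}\propto\prod_{k\in N(i)\setminus j}m_{k\to i}(x_i)$, giving the product-form update \eqref{eq:is_bp_MOT1}.

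The observation nodes require separate treatment, and this is the step I expect to be the main obstacle to getting right. For $i\in\Gamma$ the marginal $n_i$ is pinned by \eqref{eq:MOT_bethe_b}, so there is no stationarity condition in $n_i$ to exploit; instead the multiplier $\mu_{ij}(x_i)$ is fixed by the constraint itself. Using the edge marginalization $y_i(x_i)=n_i(x_i)\propto e^{\mu_{ij}(x_i)}\sum_{x_j}\psi_{ij}(x_i,x_j)e^{\mu_{ji}(x_j)}$ together with the identification that the sum on the right is proportional to $m_{j\to i}(x_i)$, I can solve $e^{\mu_{ij}(x_i)}\propto y_i(x_i)/m_{j\to i}(x_i)$. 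Plugging this into the message definition $m_{i\to j}(x_j)\propto\sum_{x_i}\psi_{ij}(x_i,x_j)e^{\mu_{ij}(x_i)}$ reproduces exactly \eqref{eq:is_bp_MOT2}. The Remark that observation nodes may be taken as leaves keeps this bookkeeping simple, since then $i$ has a single neighbor.

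The delicate points that need care, beyond the routine differentiation, are: the consistent orientation and pairing of the two marginalization multipliers per edge so that the message identification is unambiguous; the algebraic collapse of the $1/(d_i-1)$ overcounting term from the Bethe entropy, which only simplifies after the edge and node conditions are combined; and verifying that the asymmetric roles of observed versus unobserved nodes are compatible at a shared edge. Convexity of the relaxed problem guarantees these stationarity equations characterize the unique global optimum, completing the proof.
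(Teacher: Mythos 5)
Your proposal follows essentially the same route as the paper's proof: form the Lagrangian for \eqref{eq:MOT_bethe}, identify the BP messages with $\sum_{x_i}\psi_{ij}(x_i,x_j)$ times the exponentiated marginalization multipliers, combine the edge and node stationarity conditions to collapse the $1/(d_i-1)$ factor into the product rule \eqref{eq:nmarginal} and \eqref{eq:is_bp_MOT1}, and use the pinned marginal $n_i=y_i$ in place of node stationarity to obtain \eqref{eq:is_bp_MOT2} for $i\in\Gamma$. The only detail you gloss over is the unobserved leaf case $i\notin\Gamma$, $d_i=1$, where the $1/(d_i-1)$ manipulation is undefined and the paper instead argues directly from the edge stationarity and the marginalization constraint; this is a minor, easily repaired omission rather than a gap.
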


The expression $m_{i\rightarrow j}$ in~\eqref{eq:is_bp_MOT} can be viewed as messages between nodes, as in BP. Among the two classes of messages in~\eqref{eq:is_bp_MOT}, \eqref{eq:is_bp_MOT1} resembles that in standard BP while \eqref{eq:is_bp_MOT2} corresponds to the scaling step \eqref{eq:sinkhorn_multi}.

Taking all these components into account, we arrive at the Sinkhorn belief propagation algorithm (Algorithm~\ref{alg:is_bp}). Once converged, the marginals can be recovered by \eqref{eq:nmarginal}. The initialization runs a standard BP algorithm over the same PGM without accounting for the constraints \eqref{eq:variational2}. Let $\bar \Gamma$ be a sequence containing elements in $\Gamma$ such that all neighboring elements are different and each element in $\Gamma$ appear in $\bar \Gamma$ infinitely often. For an element $i\in \bar \Gamma$, denote the element following $i$ by $i_{\rm next}$.  
\begin{algorithm}[h]
   \caption{Sinkhorn Belief Propagation (SBP)}
   \label{alg:is_bp}
\begin{algorithmic}
   \STATE Initialize all the messages $m_{i\rightarrow j} (x_j)$ 
   \WHILE{not converged}
   \FOR{$i \in \bar \Gamma$}
        \STATE i) Update  $m_{i\rightarrow j} (x_j)$ using \eqref{eq:is_bp_MOT2}
        \STATE ii) Update all the messages on the path from $i$ to $i_{\rm next}$ 
        according to \eqref{eq:is_bp_MOT1}
    \ENDFOR
    \ENDWHILE
\end{algorithmic}
\end{algorithm}

In the language of the Sinkhorn algorithm (Algorithm \ref{alg:sinkhorn}), $m_{i\rightarrow j}$ is associated with $u_i$, thus step i) corresponds to modifying the PGM potential from $\bK$ to $\bK\odot\bU$ with the most recent $\bU$. The update ii) then calculates the marginal distribution at node $i_{\rm next}$ of the PGM with this modified potential. Due to the tree structure of the PGM, it suffices to update only messages from node $i$ to $i_{\rm next}$ as in step ii) of Algorithm~\ref{alg:is_bp}. 
This can be easily explained via an example as depicted in Figure~\ref{fig:sbp_explain} with $i=1, i_{\rm next} = 2$. After updating the message $m_{1\rightarrow 4} (x_4)$, we only need to update $m_{4\rightarrow 6} (x_6)$ and $m_{6\rightarrow 2} (x_2)$ since these are the only two messages that contribute to the next scaling update of $m_{2\rightarrow 6} (x_6)$ as explained by \eqref{eq:is_bp_MOT2}.

\begin{figure}[t]
\centering
\includegraphics[scale=1]{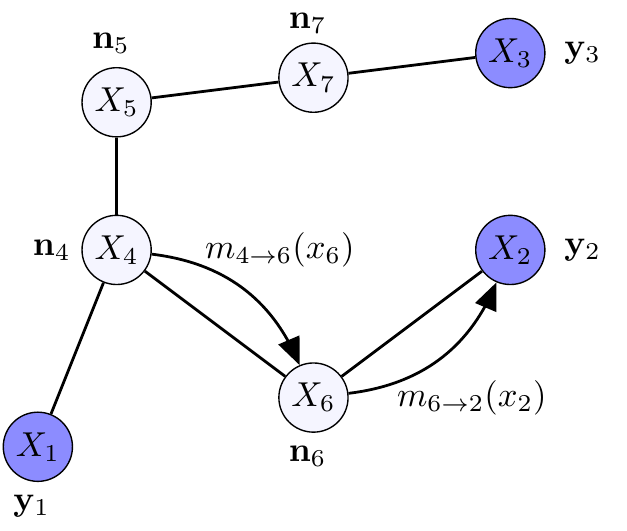}
\caption{Example graph.}
\label{fig:sbp_explain}
\end{figure}

The Sinkhorn algorithm (Algorithm \ref{alg:sinkhorn}) has a linear convergence rate \cite{LuoTse93}, so does SBP. Each iteration in Algorithm \ref{alg:sinkhorn} requires a projection step $P_j$, which is realized by belief propagation between neighboring observation nodes $i$ and $i_{\rm next}$. For a graph with $J$ nodes, where each node takes $d$ possible values, the complexity of operation \eqref{eq:is_bp_MOT1} is $O(d^2)$. The update ii) in SBP takes at most $J$ numbers of operation \eqref{eq:is_bp_MOT1}, thus, the worst case iteration complexity of SBP is $O(Jd^2)$.

\section{Filtering over collective hidden Markov models}
\label{sec:Filtering_hmm}

In this section, we study filtering problems in a hidden Markov model with aggregate observations. Towards this, we present the collective forward-backward algorithm (Algorithm \ref{alg:forward_backward}) which is a special case of SBP in HMMs. We also discuss the connections between collective filtering and filtering in standard (individual) HMMs.

An HMM consists of a hidden (unobservable) Markov chain that evolves over time and corresponding noisy variables that are observed. For the sake of simplicity in notation, denote the unobserved variables as $X_1,X_2,\ldots$ and observed variables as $O_1,O_2,\ldots$. Therefore, the underlying graph consists of the variables $V = \{X_1,X_2,\ldots, O_1,O_2,\ldots\}$ with $\Gamma = \{O_1,O_2,\ldots\}$. An HMM is parameterized by the initial distribution $\pi(X_1)$, the state transition probabilities $p(X_{t+1} \mid X_t)$, and the observation probabilities $p(O_{t} \mid X_{t})$ for each time step $t = 1,2,\ldots$. An HMM of length $T$ is represented graphically as shown in Figure~\ref{fig:hmm_model}.

\begin{figure}[h]
\centering
\includegraphics[scale=1]{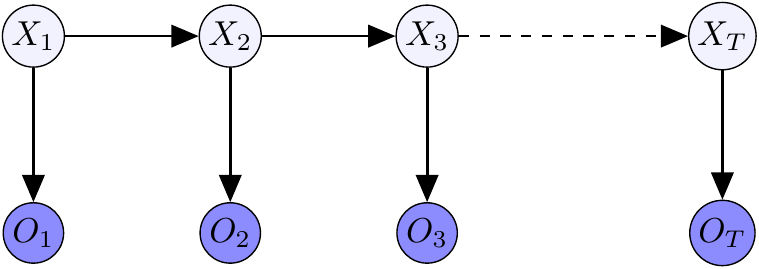}
\caption{Graphical representation of a length $T$ HMM.}
\label{fig:hmm_model}
\end{figure}

The joint distribution of an HMM with length $T$ can be factorized as
\begin{equation} \label{eq:HMM_distribution}
    p(\bx, \bo) = \pi(x_1)~ \prod_{t=1}^{T-1} ~p(x_{t+1} \mid x_{t}) ~ \prod_{t=1}^{T}~p(o_t \mid x_t),
\end{equation}
where $\bx = \{x_1,x_2,\ldots, x_T\}$ and $\bo = \{o_1,o_2,\ldots, o_T\}$ represent a particular assignment of hidden and observation variables respectively, $\pi(x_1)$ is the initial distribution of the starting state, $p(x_{t+1} \mid x_{t})$ are the transition probabilities between hidden variables, and $p(o_{t} \mid x_{t})$ are the emission probabilities for time steps $t=1,2,\ldots, T$. Although the graphical model of HMM depicted in Figure~\ref{fig:hmm_model} is directed, it can be equivalently represented by an undirected model (Markov random field) via the moralization technique as discussed in Section~\ref{subsec:pgm}. It turns out that the corresponding undirected model for an HMM is found by just replacing all the directed edges with undirected ones as in Figure~\ref{fig:hmm_message}. More specifically, the factorization of the joint distribution described by~\eqref{eq:HMM_distribution} in terms of transition and emission probabilities is equivalent to the factorization in~\eqref{eq:MRF} with edge potentials being the corresponding conditional probabilities, e.g., $\psi_{t,t} (x_t,o_t) = p(o_{t} \mid x_{t})$ and $\psi_{t,t+1} (x_t,x_{t+1}) = p(x_{t+1} \mid x_{t})$.

\subsection{Collective forward-backward algorithm}
\label{subsec:forward_backward}

To generate aggregate data in HMM settings, we follow the generative model discussed in Section~\ref{subsec:cgm} using trajectories of a large number ($M$) of individuals. Let us denote the messages in collective HMM as shown in Figure~\ref{fig:hmm_message}, where $\alpha_t(x_t)$ are the messages in the forward direction and $\beta_t(x_t)$ are the messages in the backward direction. Moreover, $\gamma_t(x_t)$ denote the messages from observation node to hidden node and $\xi_t(o_t)$ are the messages from hidden node to observation node. Note that the forward messages are $\alpha_t(x_t) = m_{t-1 \rightarrow t} (x_t)$ and the backward messages are $\beta_t(x_t) = m_{t+1 \rightarrow t} (x_t)$. Moreover, the upward and downward messages correspond to $\gamma_t(x_t) = m_{t \rightarrow t} (x_t)$ and $\xi_t(o_t) = m_{t \rightarrow t} (o_t)$, respectively. By abuse of notation, here both the upward and downward messages are denoted as $m_{t \to t}$, and distinguished only by the argument. Based on Theorem~\ref{thm:is_bp}, these messages are characterized via the following. 
\begin{cor}\label{proposition:forward_backward}
The solution to aggregate filtering problem (Problem \ref{prob:variational}) in a collective HMM is
\begin{equation}
    n_t(x_t) \propto \alpha_t(x_t) \beta_t(x_t) \gamma_t(x_t),~~ \forall t =1,2\ldots,T
\end{equation}
where $\alpha_t(x_t), \beta_t(x_t),$ and $\gamma_t(x_t)$ are the fixed points of the following updates

\begin{subequations}\label{eq:forward_backward}
\begin{eqnarray}
    \alpha_t(x_t) \propto \sum_{x_{t-1}} p(x_t|x_{t-1}) \alpha_{t-1} (x_{t-1}) \gamma_{t-1}(x_{t-1}), \label{eq:forward_backward1}  \\
    \beta_t(x_t) \propto  \sum_{x_{t+1}} p(x_{t+1}|x_{t}) \beta_{t+1} (x_{t+1}) \gamma_{t+1}(x_{t+1}), \label{eq:forward_backward2} \\
    \gamma_t(x_t) \propto \sum_{o_{t}} p(o_{t}|x_{t}) \frac{y_t(o_t)}{\xi_t(o_t)}, \label{eq:forward_backward3} \\
    \xi_t(o_t) \propto \sum_{x_{t}} p(o_{t}|x_{t}) \alpha_{t} (x_{t}) \beta_{t}(x_{t}), \label{eq:forward_backward4}
\end{eqnarray}
\end{subequations}
with boundary conditions 
\begin{equation}
    \alpha_1(x_1) = \pi(x_1) \quad \text{and} \quad \beta_T(x_T) = 1.
\end{equation}
\end{cor}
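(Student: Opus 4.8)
The plan is to obtain Corollary~\ref{proposition:forward_backward} as a direct specialization of Theorem~\ref{thm:is_bp} to the moralized HMM graph: once the HMM edge potentials and the neighborhood structure are substituted into the generic message equations \eqref{eq:is_bp_MOT1}--\eqref{eq:is_bp_MOT2}, the four recursions \eqref{eq:forward_backward1}--\eqref{eq:forward_backward4} fall out by bookkeeping. No new optimization or duality argument is needed, since Theorem~\ref{thm:is_bp} already characterizes the stationary points of Problem~\ref{prob:variational}.

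First I would record the structure of the moralized HMM. For an interior time step, the hidden node $X_t$ has neighbors $N(t) = \{X_{t-1}, X_{t+1}, O_t\}$, whereas each observation node $O_t\in\Gamma$ is a leaf whose only neighbor is $X_t$. The edge potentials are $\psi_{t,t+1}(x_t,x_{t+1}) = p(x_{t+1}\mid x_t)$ and $\psi_{t,t}(x_t,o_t) = p(o_t\mid x_t)$, as established below \eqref{eq:HMM_distribution}. I would then adopt the identifications $\alpha_t = m_{t-1\to t}$, $\beta_t = m_{t+1\to t}$, with $\gamma_t$ and $\xi_t$ both written $m_{t\to t}$ and distinguished by argument: $\gamma_t$ is the $O_t\to X_t$ message and $\xi_t$ is the $X_t\to O_t$ message.

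Next I would substitute. The forward message $m_{t-1\to t}$ is of type \eqref{eq:is_bp_MOT1} (its source $X_{t-1}\notin\Gamma$); deleting the target node $t$ from $N(t-1)=\{X_{t-2},X_t,O_{t-1}\}$ leaves the two factors $m_{t-2\to t-1}=\alpha_{t-1}$ and $m_{t-1\to t-1}=\gamma_{t-1}$, which gives \eqref{eq:forward_backward1}; the symmetric computation for $m_{t+1\to t}$ gives \eqref{eq:forward_backward2}. The downward message $\xi_t=m_{t\to t}$ is again of type \eqref{eq:is_bp_MOT1} with $N(t)\setminus O_t=\{X_{t-1},X_{t+1}\}$, producing \eqref{eq:forward_backward4}. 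The upward message $\gamma_t$ is the one case of type \eqref{eq:is_bp_MOT2}, since its source $O_t\in\Gamma$; because $O_t$ is a leaf, the denominator $m_{X_t\to O_t}$ is exactly $\xi_t$, yielding \eqref{eq:forward_backward3}. The marginal formula then follows from \eqref{eq:nmarginal} applied at $X_t$, whose three neighbors contribute $\alpha_t$, $\beta_t$, and $\gamma_t$.

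The only nonroutine point, and the step I would be most careful about, is the treatment of the two temporal endpoints, where $X_1$ has no predecessor and $X_T$ has no successor. I would handle the initial condition by absorbing the node potential $\pi(x_1)$ as a message from a virtual predecessor, so that $\alpha_1(x_1)=\pi(x_1)$; equivalently $\pi$ can be folded into $\psi_{1,2}$ and the forward recursion initialized there. At the right boundary the absence of $X_{T+1}$ means no factor $m_{T+1\to T}$ enters the product \eqref{eq:nmarginal}, which is encoded by the neutral value $\beta_T(x_T)=1$. Checking that these two conventions are consistent with the generic fixed-point equations of Theorem~\ref{thm:is_bp} completes the argument.
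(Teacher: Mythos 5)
Your proposal is correct and follows essentially the same route as the paper, which presents the corollary as a direct specialization of Theorem~\ref{thm:is_bp} to the moralized HMM graph using exactly the message identifications $\alpha_t = m_{t-1\to t}$, $\beta_t = m_{t+1\to t}$, $\gamma_t = m_{t\to t}(x_t)$, $\xi_t = m_{t\to t}(o_t)$ (the paper states this without writing out the substitution). Your explicit bookkeeping of the neighborhoods and your treatment of the endpoints ($\pi$ absorbed as a virtual incoming message at $t=1$, and $\beta_T\equiv 1$ from the absence of a successor) fill in details the paper leaves implicit, and they are consistent with the fixed-point equations of Theorem~\ref{thm:is_bp}.
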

\begin{figure}[t]
\centering
\includegraphics[scale=0.8]{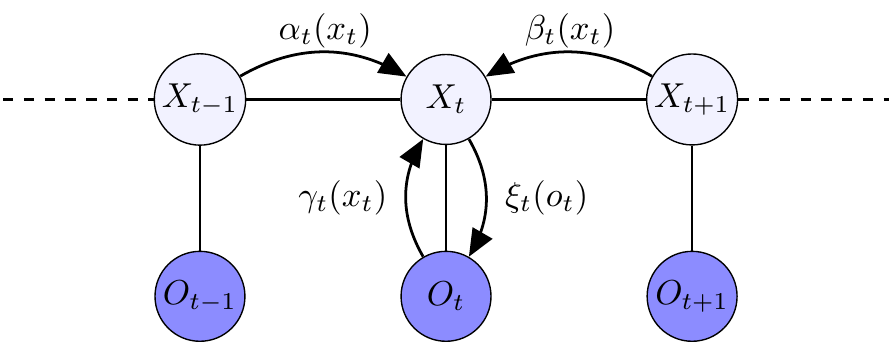}
\caption{Messages for inference in collective HMM.}
\label{fig:hmm_message}
\end{figure}

\begin{algorithm}[h]
   \caption{Collective forward-backward algorithm}
   \label{alg:forward_backward}
\begin{algorithmic}
   \STATE Initialize all the messages $\alpha_t(x_t), \beta_t(x_t), \gamma_t(x_t), \xi_t(o_t)$ 
   \WHILE{not converged}
   \STATE \textbf{Forward pass:}
   \FOR{$t = 2,3,\ldots,T$}
        \STATE i) Update  $\gamma_{t-1}(x_{t-1})$ using \eqref{eq:forward_backward3}
        \STATE ii) Update $\alpha_t(x_t), \xi_t(o_t)$
    \ENDFOR
    \STATE \textbf{Backward pass:}
    \FOR{$t = T-1,\ldots,1$}
        \STATE i) Update  $\gamma_{t+1}(x_{t+1})$ using \eqref{eq:forward_backward3}
        \STATE ii) Update $\beta_t(x_t), \xi_t(o_t)$
    \ENDFOR
    \ENDWHILE
\end{algorithmic}
\end{algorithm}
Combining Corollary \ref{proposition:forward_backward} and Algorithm \ref{alg:is_bp} we arrive at the collective forward-backward algorithm (Algorithm~\ref{alg:forward_backward}). Since the graph underlying in HMM is a tree, Algorithm~\ref{alg:forward_backward} is guaranteed to converge and the estimated marginals are exact. Upon the convergence of the algorithm, the marginals can be estimated as
\begin{equation*}
    n_t(x_t)  \propto \alpha_t(x_t) \beta_t(x_t) \gamma_t(x_t),~~ \forall t =1,2\ldots,T.
\end{equation*}
Note that Algorithm~\ref{alg:forward_backward} uses the scheduling sequence $\bar\Gamma=\{o_1,o_2,\ldots,o_T,o_{T-1},\ldots,o_1,o_2,\ldots\}$. Other choice of $\bar\Gamma$ would also work. 
Next, we discuss the scenario of a single trajectory with $M=1$ and show that, in such case, Algorithm~\ref{alg:forward_backward} reduces to the standard forward-backward algorithm \cite{Rab89} which is widely used in filtering problems for HMMs. 

\subsection{Connections to standard HMM filtering}
\label{subsec:connection_HMM}

In standard HMM, a fixed observation sample is recorded at each time step $t$ forming a $T$ length sequence constituting an individual's observations. This filtering/smoothing task in standard HMM is achieved using the standard BP algorithm discussed in Section~\ref{subsec:pgm}. In this special case of HMM graph as depicted in Figure~\ref{fig:hmm_model}, the standard BP takes a simple form, known as forward-backward algorithm \cite{Rab89}. 

\begin{algorithm}[t]
   \caption{Forward-backward algorithm}
   \label{alg:standard_forward_backward}
\begin{algorithmic}
   \STATE Initialize the messages $\alpha_t(x_t), \beta_t(x_t)$ with $\alpha_1(x_1) = \pi(x_1)$ and $\beta_T(x_T) = 1$
   \STATE \textbf{Forward pass:}
   \FOR{$t = 2,3,\ldots,T$}
        \STATE Update $\alpha_t(x_t)$ using \eqref{eq:forward_standard}
    \ENDFOR
    \STATE \textbf{Backward pass:}
    \FOR{$t = T-1,\ldots,1$}
        \STATE Update $\beta_t(x_t)$ using \eqref{eq:backward_standard}
    \ENDFOR
\end{algorithmic}
\end{algorithm}

The required marginals, given the observation sequence $\hat{o}_{1:T} = \{\hat{o}_1,\hat{o}_2,\ldots,\hat{o}_T\}$, are
\begin{align}
    p(x_t|\hat{o}_{1:T}) &\propto p(\hat{o}_{1:T} | x_t) p(x_t) \nonumber \\
    &= p(\hat{o}_{1:t-1} | x_t) p(\hat{o}_{t} | x_t) p(\hat{o}_{t+1:T} | x_t) p(x_t) \nonumber \\
    &= p(\hat{o}_{t} | x_t) p(\hat{o}_{1:t-1}, x_t)  p(\hat{o}_{t+1:T} | x_t) \nonumber \\
    &= p(\hat{o}_{t} | x_t) \alpha_t(x_t) \beta_t(x_t),
\end{align}
where $\alpha_t(x_t) = p(x_t,\hat{o}_{1:t-1})$ denote the forward messages and $\beta_t(x_t) = p(\hat{o}_{t+1:T} | x_t)$ represent the backward messages. These messages in the forward-backward algorithm take the form
\begin{align}\label{eq:forward_standard}
    \alpha_t(x_t) &= p(x_t,\hat{o}_{1:t-1}) \nonumber \\ 
    &= \sum_{x_{t-1}} p(x_t, x_{t-1},\hat{o}_{1:t-1}) \nonumber \\
    &= \sum_{x_{t-1}} p(\hat{o}_{t-1}|x_{t-1}) p(x_t|x_{t-1},\hat{o}_{1:t-2}) p (x_{t-1},\hat{o}_{1:t-2})  \nonumber \\
    &= \sum_{x_{t-1}} p(x_t|x_{t-1}) \alpha_{t-1}(x_{t-1}) p(\hat{o}_{t-1}|x_{t-1}),
\end{align}
\begin{align}\label{eq:backward_standard}
    \beta_t(x_t) &= p(\hat{o}_{t+1:T} | x_t) \nonumber \\ 
    &= \sum_{x_{t+1}} p(\hat{o}_{t+1},\hat{o}_{t+2:T},x_{t+1} | x_t)  \nonumber \\
    &= \sum_{x_{t+1}} p(x_{t+1}|x_t)  p(\hat{o}_{t+2:T} | x_{t+1}) p(\hat{o}_{t+1}|x_{t+1}) \nonumber \\
    &= \sum_{x_{t+1}} p(x_{t+1}|x_t)  \beta_{t+1}(x_{t+1}) p(\hat{o}_{t+1}|x_{t+1}).
\end{align}

All the steps of the standard forward-backward algorithm are listed in Algorithm~\ref{alg:standard_forward_backward}. Now we establish the relationship between filtering in standard and collective HMMs.

\begin{thm}
For a single individual case ($M=1$), the collective forward-backward algorithm reduces to the standard forward-backward algorithm.
\end{thm}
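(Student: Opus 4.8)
The plan is to trace how each update in the collective forward-backward recursions of Corollary~\ref{proposition:forward_backward} degenerates when the population consists of a single trajectory. The essential observation is that with $M=1$ the aggregate observation at each time $t$ is a lone sample $\hat o_t$, so after normalization the observed marginal $y_t$ becomes a point mass, $y_t(o_t) = \mathbb{I}[o_t = \hat o_t]$. First I would substitute this into the upward-message update \eqref{eq:forward_backward3}: the sum over $o_t$ collapses to the single surviving term $o_t = \hat o_t$, giving $\gamma_t(x_t) \propto p(\hat o_t \mid x_t)/\xi_t(\hat o_t)$.

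The next step is to note that $\xi_t(\hat o_t)$, as defined in \eqref{eq:forward_backward4}, is a scalar that does not depend on $x_t$; it therefore contributes only to the proportionality constant, and we conclude $\gamma_t(x_t) \propto p(\hat o_t \mid x_t)$. This is the crucial simplification: the upward message reduces to the fixed emission likelihood and, in particular, no longer depends on the forward and backward messages $\alpha_t,\beta_t$ through $\xi_t$. Consequently the three coupled recursions decouple, with $\gamma_t$ determined once and for all by the data, while $\alpha_t$ and $\beta_t$ obey closed recursions.

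I would then substitute $\gamma_{t-1}(x_{t-1}) \propto p(\hat o_{t-1}\mid x_{t-1})$ into the forward update \eqref{eq:forward_backward1} and $\gamma_{t+1}(x_{t+1}) \propto p(\hat o_{t+1}\mid x_{t+1})$ into the backward update \eqref{eq:forward_backward2}. These become exactly the standard forward and backward recursions \eqref{eq:forward_standard} and \eqref{eq:backward_standard}, while the boundary conditions $\alpha_1 = \pi$ and $\beta_T = \mathbf{1}$ coincide in both algorithms. Finally, since $\gamma_t$ no longer feeds back through $\xi_t$, the fixed-point iteration of Algorithm~\ref{alg:forward_backward} terminates after a single forward sweep (for $\alpha$) followed by a single backward sweep (for $\beta$), precisely matching the one-pass structure of Algorithm~\ref{alg:standard_forward_backward}.

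The main obstacle I anticipate is not the algebraic substitution but the decoupling argument: one must verify carefully that $\xi_t(\hat o_t)$ enters $\gamma_t$ only as a multiplicative constant in $x_t$, so that the mutual dependence between $\gamma_t$ and $(\alpha_t,\beta_t)$ is severed and the collective fixed-point iteration collapses to a single non-iterative pass. Once this is established, the equivalence of the two algorithms follows immediately from the term-by-term identification of the recursions and boundary conditions.
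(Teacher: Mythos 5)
Your proposal is correct and follows essentially the same route as the paper's proof: substitute the delta observation $y_t(o_t)=\delta(o_t-\hat o_t)$ into \eqref{eq:forward_backward3}, note that $\xi_t(\hat o_t)$ is independent of $x_t$ and hence only a scaling constant, conclude $\gamma_t(x_t)\propto p(\hat o_t\mid x_t)$, and identify the resulting recursions with \eqref{eq:forward_standard}--\eqref{eq:backward_standard}. Your added remark that the decoupling collapses the fixed-point iteration to a single forward--backward pass is a nice observation the paper leaves implicit, but the core argument is the same.
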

\begin{proof}
For a single individual case ($M=1$), a fixed sequence of observations $\hat{o}_1, \hat{o}_2,\ldots,\hat{o}_T$ is made. In this case, the (aggregate) observations take the form 
\begin{equation}
    y_{t}(o_t) = \delta(o_t - \hat{o}_t),
\end{equation}
where $\delta(\cdot)$ denotes Dirac function. With these fixed delta observations, the messages in Equation~\eqref{eq:forward_backward3} become
\begin{equation*}
    \gamma_t(x_t) \propto \sum_{o_{t}} p(o_{t}|x_{t}) \frac{\delta(o_t - \hat{o}_t)}{\xi_t(o_t)} = \frac{p(\hat{o}_t | x_{t})}{\xi_t(\hat{o}_t)}.
\end{equation*}
Note that the denominator in the last term of the above equation can be omitted since it only serves as a scaling coefficient and therefore,
\begin{equation} \label{eq:upward_delta}
     \gamma_t(x_t) = p(\hat{o}_t | x_{t}).
\end{equation}
Now the forward and backward messages take the form
\begin{subequations}\label{eq:standard_forward_backward}
\begin{eqnarray}
    \alpha_t(x_t) \propto \sum_{x_{t-1}} p(x_t|x_{t-1}) \alpha_{t-1} (x_{t-1}) p(\hat{o}_{t-1} | x_{t-1}), \label{eq:standard_forward_backward1}  \\
    \beta_t(x_t) \propto \sum_{x_{t+1}} p(x_{t+1}|x_{t}) \beta_{t+1} (x_{t+1}) p(\hat{o}_{t+1} | x_{t+1}). \label{eq:standard_forward_backwar2} 
\end{eqnarray}
\end{subequations}
Using which, the required marginals are estimated as
\begin{equation}
    n_t(x_t) \propto p(\hat{o}_t | x_{t}) \alpha_t(x_t) \beta_t(x_t).
\end{equation}
The messages given by Equation~\eqref{eq:standard_forward_backward} are nothing but the messages used in the forward-backward algorithm for standard HMMs as in Algorithm~\ref{alg:standard_forward_backward}. 
\end{proof}

The relationship between the filtering problems in standard and collective HMMs can be intuitively explained as depicted in Figure~\ref{fig:relationship}. In case of general aggregate observations, the downward messages $\xi_{t}(o_t)$ bounce back (Figure~\ref{fig:relationship_a}) and contribute to the corresponding upward messages $\gamma_t(x_t)$ as in Equation~\eqref{eq:forward_backward3}. In case of $M=1$, the aggregate observations result in Dirac distributions and as a consequence, the downward messages get absorbed (Figure~\ref{fig:relationship_b}) and do not contribute to upward messages as explained by Equation~\eqref{eq:upward_delta}.

\begin{figure}[tb]
\centering
\begin{subfigure}{.23\textwidth}
\centering
\includegraphics[scale=0.63]{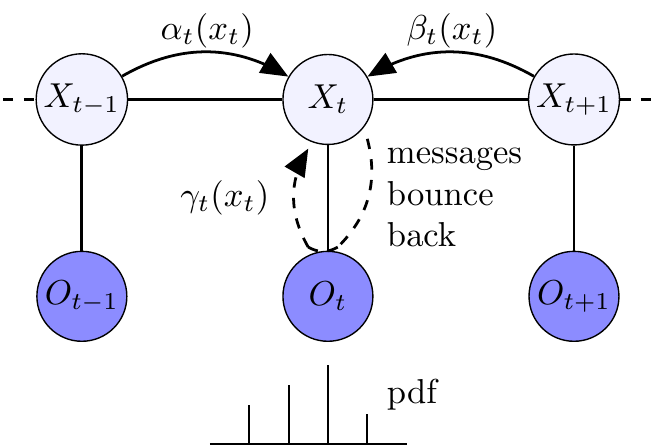}
\caption{Aggregate observations}
\label{fig:relationship_a}
\end{subfigure} \hspace*{0.1cm}
\begin{subfigure}{.23\textwidth}
\centering
\includegraphics[scale=0.63]{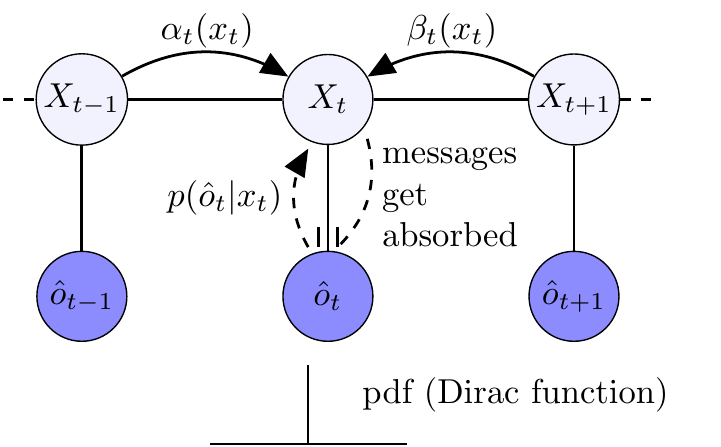}
\caption{Delta observations (standard)}
\label{fig:relationship_b}
\end{subfigure}
\caption{Relationship between standard and collective HMMs.}
\label{fig:relationship}
\end{figure}

\begin{rem}
Indeed, the relationship between standard HMM and collective HMM can be extended to more general graphical models. It turns out that for any arbitrary tree-structured graphical model, in the case of $M=1$ (fixed \textbf{delta} aggregate observations), the collective inference algorithm SBP coincides with the standard BP.
\end{rem}

\begin{figure*}[h]
    \centering
    \begin{subfigure}[b]{0.31\textwidth}
        \centering
        \includegraphics[width=1.0\textwidth]{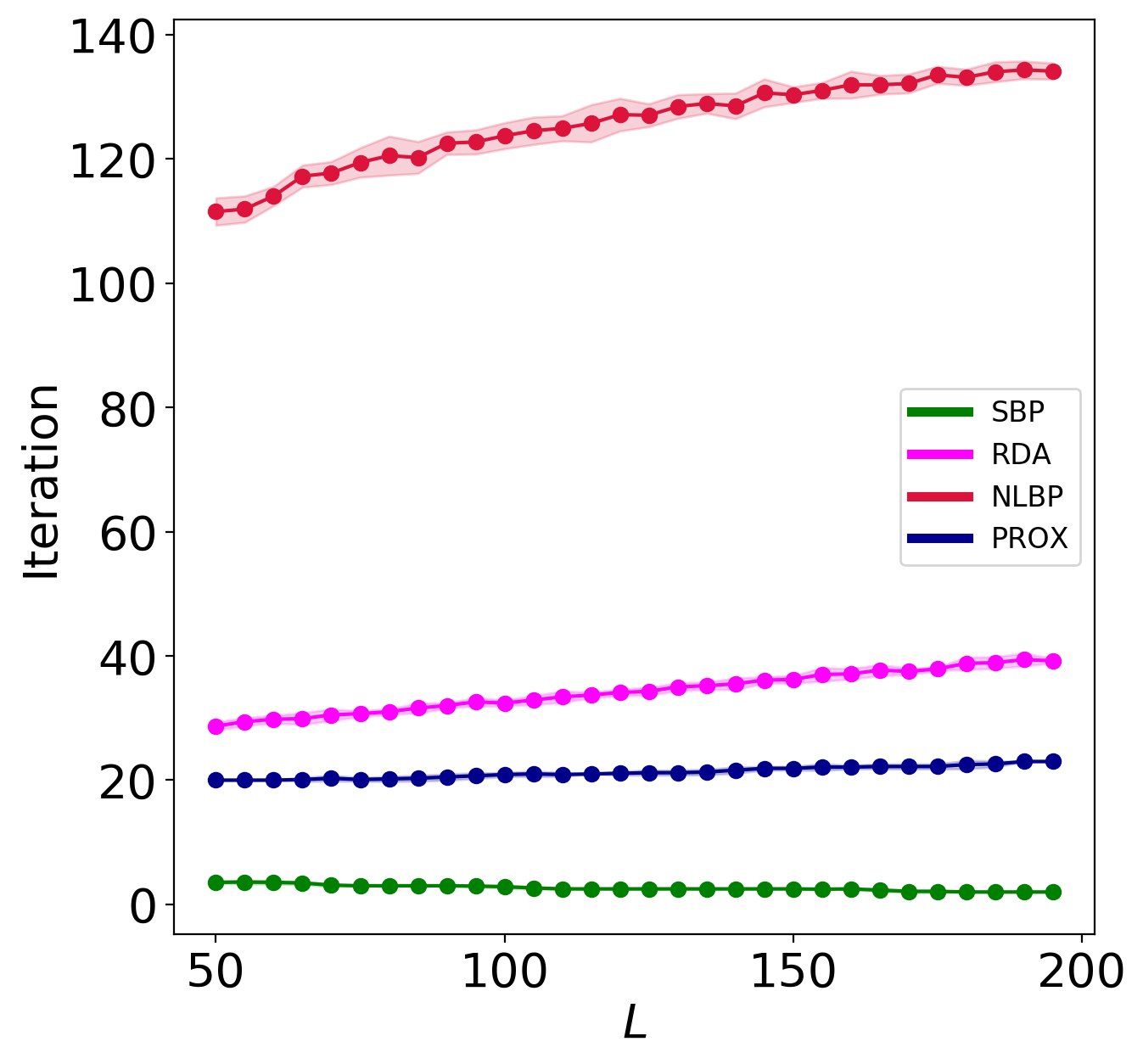}
        \caption{}
        \label{fig:vary_grid}
    \end{subfigure}
    \begin{subfigure}[b]{0.31\textwidth}
        \centering
        \includegraphics[width=1.0\textwidth]{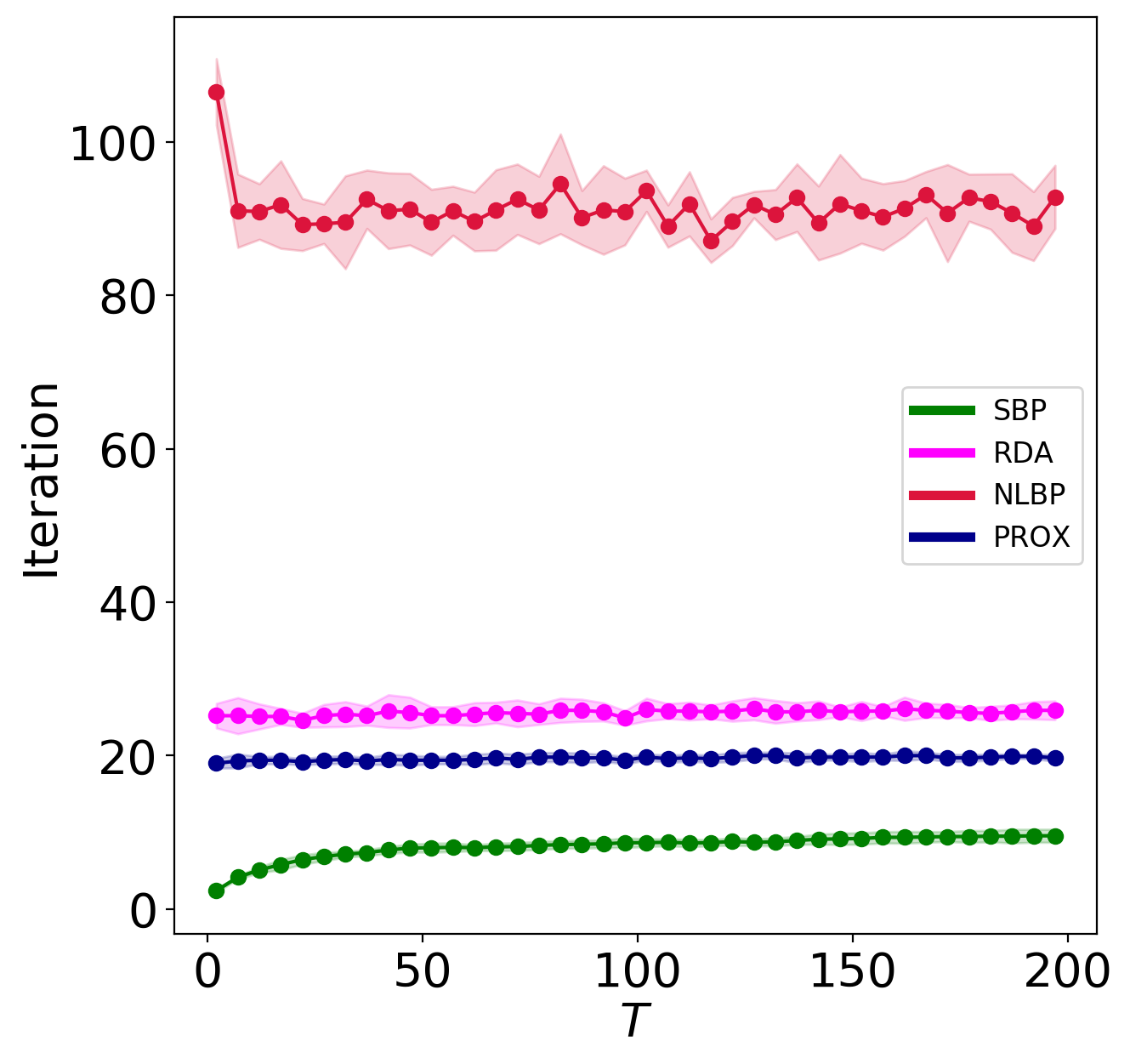}
        \caption{}
        \label{fig:vary_time}
    \end{subfigure}
    \begin{subfigure}[b]{0.31\textwidth}
        \centering
        \includegraphics[width=1.0\textwidth]{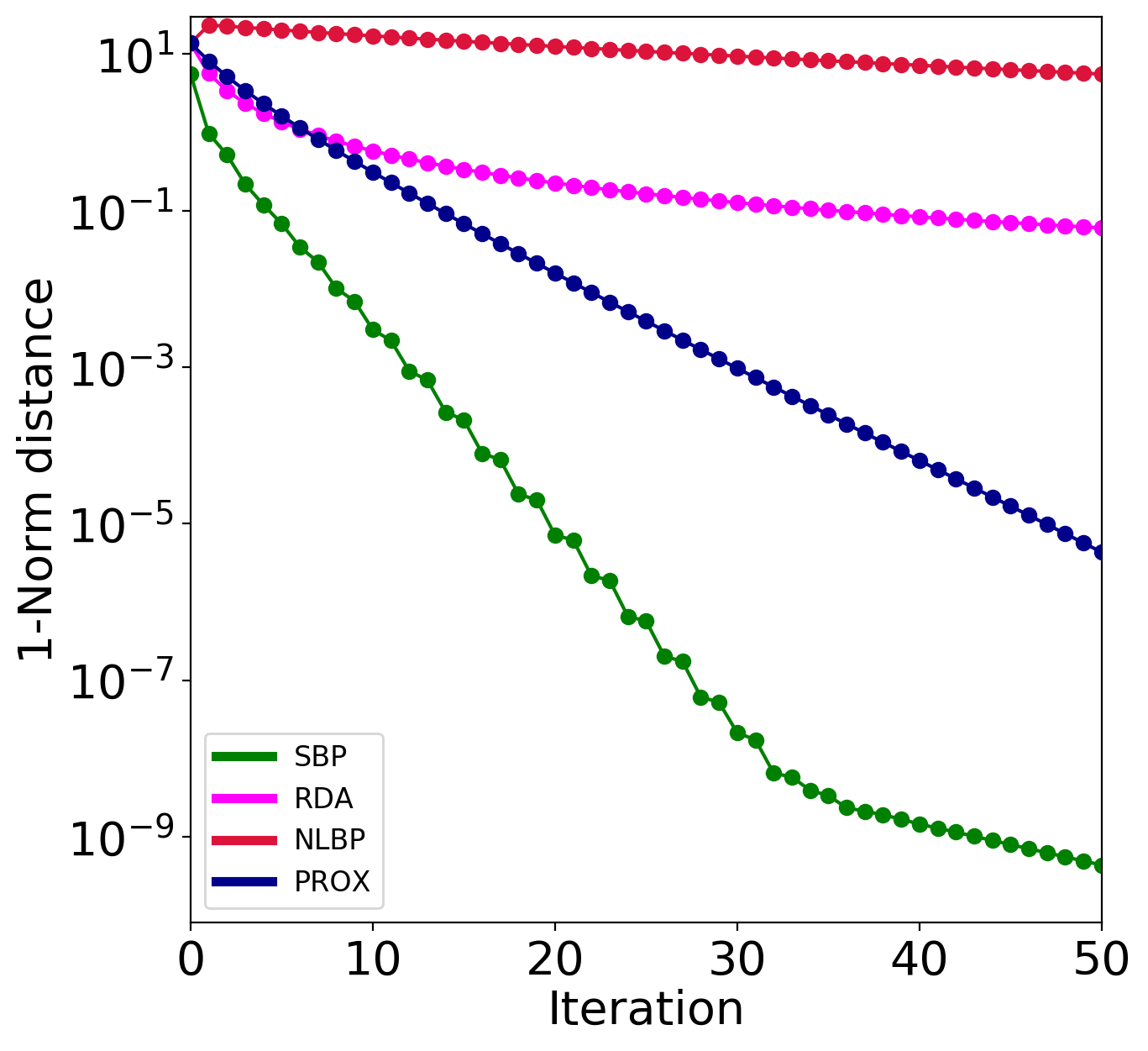}
        \caption{}
        \label{fig:convergence_time}
    \end{subfigure}
    \caption{Comparison of performance between NLBP, Bethe-RDA, PROX and SBP: (a) illustrates the performance for different grid sizes $L$ with a constant $T=20$, (b) compares the oracle complexity for different values of $T$ with fixed $L=50$, 
    (c) shows the convergent behavior in terms of 1-norm distance with respect to the optimal solutions.
    For both (a) and (b), each algorithm is run over $10$ different trials. The solid curves represent the mean time and the shaded regions represent the corresponding $\pm 1 \times$ standard deviation. For (c), the parameters are fixed as $L=50$ and $T=50$. }
    \label{fig:comparision}
\end{figure*}

\section{Evaluation}
\label{sec:eval}
We conduct three sets of experiments to evaluate the performance of our algorithms. The first one aims to evaluate the efficiency and convergence rate of SBP, compared with NLBP and Bethe-RDA. Note that since SBP uses a different observation model to NLBP and Bethe-RDA, this experiment is carried out only to compare the convergent behaviors of the algorithms. In the second experiment, we present an application of SBP in estimating ensembles with sparse information. We also empirically show that SBP algorithms can have good performance even in a PGM with loops. All the experiment were run on Intel
i7- 9700 CPU.
\subsection{Bird Migration}
\label{subsec:bird_migration}
First, we study a synthetic bird migration problem. Following the environment \cite{SunSheKum15}, we simulate $M$ birds flying over a $L \times L$ grid, aiming from bottom-left to top-right. The position transition probability between previous time and current time step follows a log-linear distribution that accounts for four factors: the distance between two positions, the angle between the movements direction and the wind, the angle between the direction of movement and the direction to the goal, and the preference to stay in the original cell. Each bird is simulated independently, following a $T$-step Markov chain. The parameter for the log-linear model is denoted by $\mathbf{w}$. In the NLBP setting, the sensors count the number of birds flying through each cell. Independent Poisson noise is added to each sensor measurement, which follows $y \propto {\rm Poisson}(\beta n)$. In the SBP setting, we use a Gaussian observation model. The probability for an individual bird to be observed by a sensor follows a Gaussian distribution centered at the sensor. In all experiments, we employ model parameters $\mathbf{w} = (3,5,5,10)$; the same parameters are used in the estimation algorithms. 
We set $M=5000$, and $\beta=1$. We compare the convergence performance between NLBP, Bethe-RDA and SBP with different $L$ and $T$ values, as depicted in Figure \ref{fig:comparision}.
In fact, we also implemented a simplified version, PROX, of Bethe-RDA which is based on standard proximal gradient algorithm with Kullback-Leibler divergence. When $T$ is fixed and $L$ varies, SBP is faster than NLBP, Bethe-RDA and PROX. When $L$ is fixed and $T$ varies, SBP is also faster than NLBP, Bethe-RDA and PROX. Moreover, we find that the run time does not grow much as $T$ increases. This makes SBP suitable for large HMMs. The convergence behaviors of NLBP, Bethe-RDA, PROX, and SBP are displayed in Figures \ref{fig:comparision}(c). Note that we show the total number of iterations instead of total running time for the comparison as all these algorithms have similar per iteration complexity. The 1-norm distance between true marginal distributions and the estimated distributions decrease monotonically for SBP, Bethe-RDA and PROX, whereas some instability occurs for NLBP. To stabilize NLBP, a damping ratio $\alpha$ is needed. However, higher damping rate implies smaller step size which in turn slows down the algorithm. We also find that the convergence property of NLBP is sensitive to the prior distribution and damping ratio when $T$ is large. 

\begin{figure*}[ht]
    \centering
    \begin{subfigure}[b]{0.48\textwidth}
        \centering
        \includegraphics[scale = 1.3]{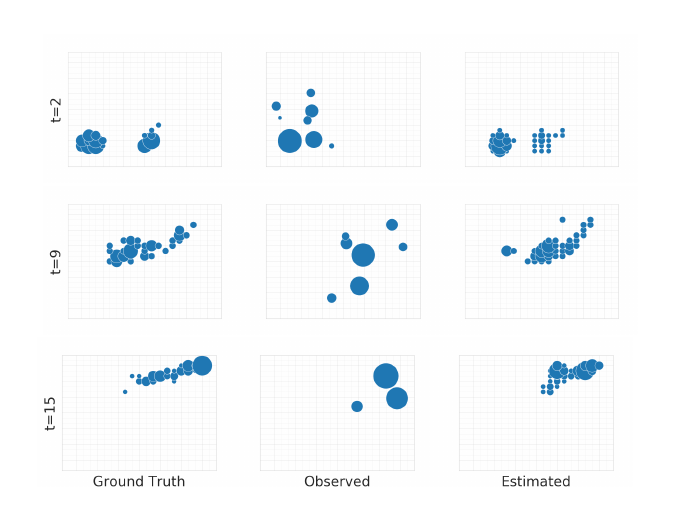}
        \caption{$M = 100$}
    \end{subfigure} \hspace{0.5cm}
    \begin{subfigure}[b]{0.48\textwidth}
        \centering
        \includegraphics[scale=1.3]{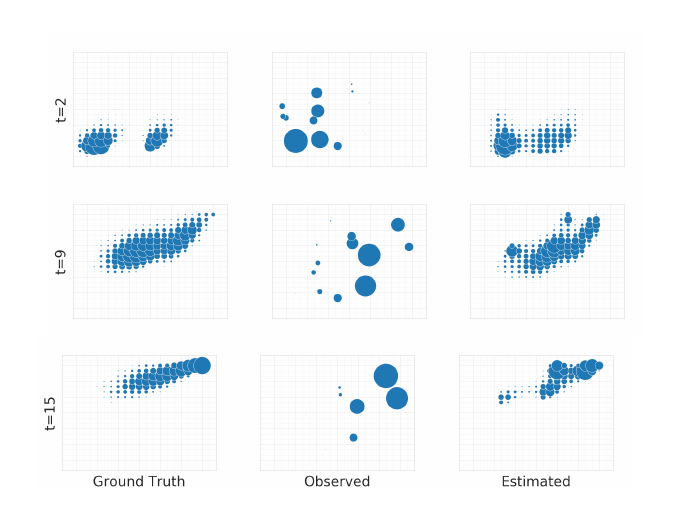}
        \caption{$M =10000$}
    \end{subfigure}
    \caption{Simulation of the movement of (a) 100 agents and (b) 10000 agents over $20 \times 20$ grid for $T=15$. In each of the figures, the first column depicts the real movement of agents at different time steps, the second column represents the aggregate sensor observations, and the third column depicts estimated aggregated positions. Here, the size of the circles is proportional to the number of agents.}
    \label{fig:ensemble_flow}
\end{figure*}

\begin{figure}[h]
    \centering
    \includegraphics[width=0.4\textwidth,height=0.32\textwidth]{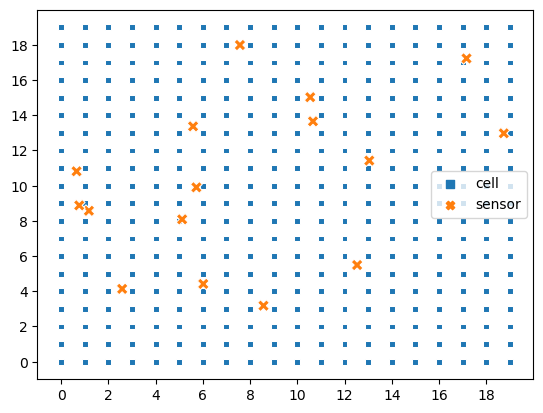}
    \caption{Sensor Location}
    \label{fig:my_label}
\end{figure}

\subsection{Ensemble Estimation with Sparse Information}
Next, we conduct a more challenging experiment wherein the aggregate observations are sparse. The task is to track ensembles over a network with limited number of sensors. The sensors can not tell the exact locations of the agents, such as in the case of Wi-Fi hotspots, or cell phone based stations, which can only tell the number of connected devices. Ensemble estimation is needed in tracking the human group activity without loss of individual privacy. We evaluate the model over a $20\times 20$ grid network with 16 sensors placed randomly as in Figure \ref{fig:my_label}. At each timestamp, each sensor observes a count, which records the number of agents connected to the sensor. Each agent can only connect with one sensor at a time and the probability of the connection decreases exponentially as the distance between agent and the sensor increases. To demonstrate the performance of SBP in estimating multi-modal distributions, we simulate a population with two clusters: one from left-bottom and one from center-bottom; both aim to the right-top corner of the grid in a $T=15$ time interval. We model the transition probability as in the bird migration setup discussed in Section~\ref{subsec:bird_migration}. 

We run simulations with 100 agents (Figure~\ref{fig:ensemble_flow}(a)) and 10000 agents (Figure~\ref{fig:ensemble_flow}(b)). As can be seen from the figures, even with such a sparse observation model, SBP can still infer the population movements to a satisfying accuracy.


\begin{figure}[ht]
    \centering
    \includegraphics[scale = 0.7]{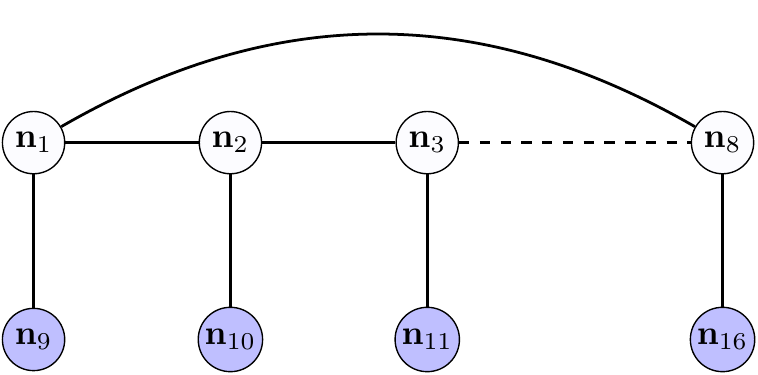}
    \caption{A loopy graph.}
    \label{fig:loopy}
\end{figure}

\subsection{Empirical Loopy Graph Validation}
Finally, we run a simple toy experiment where the underlying graph has loops as shown in Figure~\ref{fig:loopy}. We set $|\mathcal{X}_u| = |\mathcal{X}_o| = 5$ and generate the aggregate node distributions randomly. Moreover, the edge potential were generated using $\exp(I +Q)$, where $I$ is the identity matrix and $Q$ represents a random matrix generated by a Gaussian distribution. We estimate the marginal distribution by solving \eqref{eq:variational} using the SBP algorithm and then compare them with exact marginals by solving \eqref{eq:variational} using generic convex optimization algorithms. The convergence of the estimates for five different random seeds in terms or 1-norm distance is shown in Figure \ref{fig:16nodes}. We observe that the SBP algorithm has a good performance on the loopy graph.

\begin{figure}[ht]
    \centering
    \includegraphics[width = 0.7 \columnwidth]{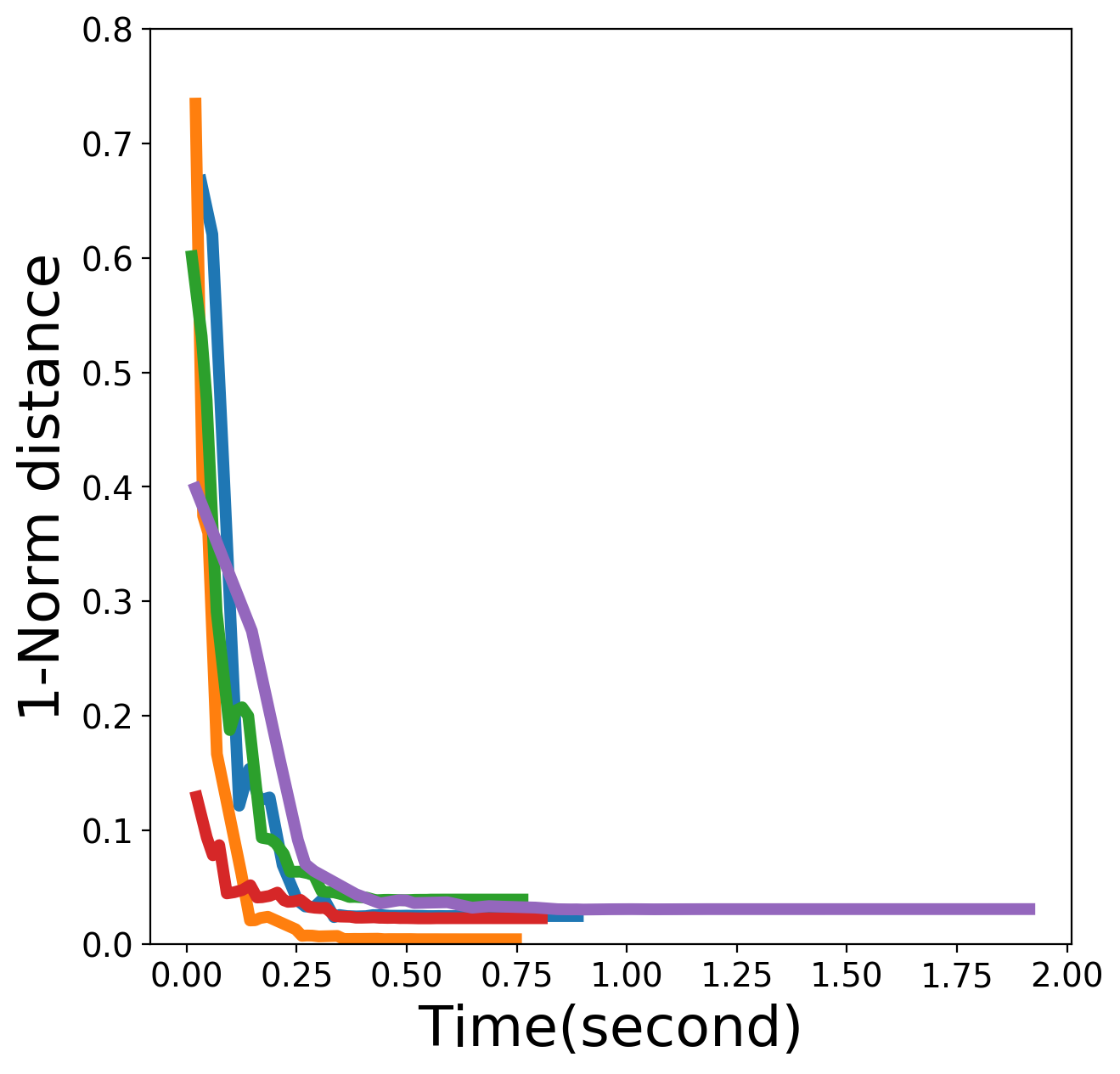} 
    \caption{Convergence of SBP on the loopy graph shown in Figure~\ref{fig:loopy}. Different colors represent different realizations.}
    \label{fig:16nodes}
\end{figure}
\setlength{\belowcaptionskip}{-10pt}
\section{Conclusion}
\label{sec:conclusion}

In this paper, we presented a reliable algorithm for inference/filtering from aggregate data based on multi-marginal optimal transport theory. We established that the aggregate inference/filtering problem is a special case of the entropic regularized MOT problem when the cost of MOT is structured according to the graphical model. We then combined the Sinkhorn algorithm for the MOT problems and the standard belief propagation algorithm to establish our method. Our algorithm enjoys fast convergence and has a convergence guarantee when the underlying graph structure is a tree. For the cases of HMMs which are widely used in control and estimation, we specialize our SBP algorithm to establish the collective forward-backward algorithm. The latter naturally generalizes the forward-backward algorithm in standard filtering problems for HMMs. We evaluated the performance of our algorithm on multiple applications involving inference from aggregate data such as bird migration and human mobility based on hidden Markov models. In the future, we plan to extend our current algorithm to cover graphical models with continuous state space.
We also plan to investigate the parameter learning of CGMs using the MOT framework.



\bibliographystyle{IEEEtran}
\bibliography{./refs}

\newpage 

\appendix

\subsection{Proof of Theorem~\ref{thm:is_bp}}
\label{appendix:proof_thm1}
We first construct a Lagrangian for Problem~\eqref{eq:MOT_bethe} with Lagrange multipliers $\lambda_i(x_i)$ corresponding to the fixed marginal constraints \eqref{eq:MOT_bethe_b}, $\nu_{ji}(x_i)$ corresponding to the marginalization constraints \eqref{eq:MOT_bethe_c}, and $\zeta_i$ corresponding to the normalization constraints \eqref{eq:MOT_bethe_d}. 
This yields the Lagrangian
\begin{align}\label{eq:Lagrangian}
    \mathcal{L} =&~ F_{\rm Bethe}(\bn) + \sum_{i \in \Gamma} \sum_{x_i} \lambda_i(x_i) \left( n_i(x_i) - y_i(x_i) \right) \nonumber \\
    &~ + \sum_{i\in V} \sum_{j \in N(i)} \sum_{x_i} \nu_{ji}(x_i) \left( \sum_{x_j} n_{ij}(x_i,x_j) - n_i(x_i) \right)  \nonumber \\
    &~ + \sum_{i \in V} \zeta_i \left( \sum_{x_i} n_i(x_i) - 1 \right).
\end{align}
Differentiating the Lagrangian with respect to the aggregate marginals $\bn_{ij}$ and equating this derivative to zero, we obtain
\begin{align}\label{eq:proof_n_ij}
    \frac{\partial \cL}{\partial n_{ij}(x_i,x_j)} \!&=\! 1 + \ln~\frac{n_{ij}(x_i,x_j)}{\psi_{ij}(x_i,x_j)} +  \nu_{ji}(x_i) +  \nu_{ij}(x_j) = 0 \nonumber \\
    \Rightarrow  n_{ij} (x_i,x_j) &\propto \psi_{ij}(x_i,x_j) \mathrm{exp} \left(  -   \nu_{ji}(x_i) -  \nu_{ij}(x_j) \right).
\end{align}
Now differentiating the Lagrangian with respect to $\bn_i$, for $i \notin \Gamma$ and $d_i > 1$, we have
\begin{align} \label{eq:lagrange_ni}
    \frac{\partial \cL}{\partial n_{i}(x_i)} &= \zeta_i- (d_i -1) [1 + \ln n_i(x_i)] - \sum_{j \in N(i)} \nu_{ji}(x_i)= 0 \nonumber \\
    \Rightarrow ~ n_{i} (x_i) &\propto \mathrm{exp} \left( - \frac{1}{d_i - 1} \left\{  \sum_{j \in N(i)} \nu_{ji}(x_i) \right\} \right).
\end{align} 
Similarly, when $i \notin \Gamma$ and $d_i = 1$, we have
\begin{align} \label{eq:lagrange_ni_one}
    \frac{\partial ~\cL}{\partial ~n_{i}(x_i)} = 0 ~\Rightarrow~ \nu_{ji}(x_i) - \zeta_i = 0,
\end{align} 
where $j\in N(i)$ is the only neighbor of node $i$.

Denote
\begin{equation}\label{eq:denote_message}
    m_{i \rightarrow j}(x_j) := \sum_{x_i} \psi_{ij}(x_i,x_j)~ \mathrm{exp}(- \nu_{ji}(x_i)).
\end{equation}
Using \eqref{eq:proof_n_ij}, \eqref{eq:lagrange_ni} and marginalization constraint \eqref{eq:MOT_bethe_c}, we obtain, for $i$ such that $d_i>1$,
\begin{align}\label{eq:marg_n_i}
    &\sum_{x_j} \psi_{ij}(x_i,x_j)~ \mathrm{exp} \left( -   \nu_{ji}(x_i) -  \nu_{ij}(x_j)  \right) \propto  \nonumber \\
     &\quad \quad  \mathrm{exp} \left( - \frac{1}{d_i - 1} \left\{  \sum_{j \in N(i)} \nu_{ji}(x_i) \right\} \right) \nonumber \\
    &\Rightarrow m_{j \rightarrow i} (x_i)~ \mathrm{exp}(-\nu_{ji}(x_i)) \propto \prod_{j \in N(i)} \mathrm{exp}~(- \nu_{ji}(x_i))^{\frac{1}{d_i - 1}} \nonumber \\
    &\Rightarrow \prod_{j \in N(i)} m_{j \rightarrow i} (x_i)~ \mathrm{exp}(-\nu_{ji}(x_i)) \propto \nonumber \\
     &\quad \prod_{j \in N(i)} \left( \mathrm{exp}~(- \nu_{ji}(x_i))^{\frac{1}{d_i - 1}} \prod_{k \in N(i) \backslash j} \mathrm{exp}~(- \nu_{ki}(x_i))^{\frac{1}{d_i - 1}} \right).  \nonumber
     \end{align}
    It follows that, by leveraging the fact that $|N(i)|=d_i$,
    \begin{align}
     &\prod_{j \in N(i)} m_{j \rightarrow i} (x_i)~ \mathrm{exp}(-\nu_{ji}(x_i)) \propto \nonumber \\
     &\quad \left( \prod_{j \in N(i)} \mathrm{exp}~(- \nu_{ji}(x_i))^{\frac{1}{d_i - 1}} \right) \left( \prod_{k \in N(i)} \mathrm{exp}~(- \nu_{ki}(x_i)) \right)  \nonumber \\
     &\Rightarrow \prod_{j \in N(i)} m_{j \rightarrow i} (x_i) \propto \prod_{j \in N(i)} \mathrm{exp}~(-\nu_{ji}(x_i)^{\frac{1}{d_i - 1}} .
\end{align}
Combining \eqref{eq:lagrange_ni} and \eqref{eq:marg_n_i}, we arrive at
\begin{align}\label{eq:proved_ni}
    n_i(x_i) \propto \prod_{j \in N(i)} m_{j \rightarrow i} (x_i),
\end{align}
which is \eqref{eq:nmarginal} for $d_i > 1$ and $i \notin \Gamma$. 

For  $i \notin \Gamma$ and $d_i = 1$, i.e., unconstrained variable at a leaf node, using \eqref{eq:proof_n_ij} and \eqref{eq:lagrange_ni_one} we deduce that \begin{align}
    n_{ij}(x_i,x_j) \propto  \psi_{ij}(x_i,x_j)~ \mathrm{exp} \left(-  \nu_{ij}(x_j)  \right).
\end{align}
In view of \eqref{eq:denote_message} and marginalization constraint \eqref{eq:MOT_bethe_c}, we get
\begin{align}
    n_i(x_i) &= \sum_{x_j}  n_{ij}(x_i,x_j) \nonumber \\
     &\propto \sum_{x_j} \psi_{ij}(x_i,x_j)~ \mathrm{exp} \left(-  \nu_{ij}(x_j)  \right) \nonumber \\
     &= m_{j \rightarrow i}(x_i),
\end{align}
which is \eqref{eq:nmarginal} for $d_i =1$ and $i \notin \Gamma$.

Combining \eqref{eq:proof_n_ij} and \eqref{eq:MOT_bethe_c}, we obtain
\begin{align}\label{eq:proved_message}
    &\sum_{x_j} \psi_{ij}(x_i,x_j)~ \mathrm{exp} \left(  -   \nu_{ji}(x_i) -  \nu_{ij}(x_j)  \right) \propto n_i(x_i) \nonumber \\
    &\Rightarrow m_{j \rightarrow i} (x_i)~ \mathrm{exp}(-\nu_{ji}(x_i)) \propto n_i(x_i) \nonumber \\
    &\Rightarrow \mathrm{exp}(-\nu_{ji}(x_i)) \propto \frac{n_i(x_i) }{m_{j \rightarrow i} (x_i)} \nonumber \\
    &\Rightarrow \sum_{x_i} \psi_{ij}(x_i,x_j) \mathrm{exp}(-\nu_{ji}(x_i)) \propto \sum_{x_i} \psi_{ij}(x_i,x_j) \frac{n_i(x_i) }{m_{j \rightarrow i} (x_i)} \nonumber \\
    &\Rightarrow m_{i \rightarrow j}(x_j) \propto \sum_{x_i} \psi_{ij}(x_i,x_j) \frac{n_i(x_i) }{m_{j \rightarrow i} (x_i)}.
\end{align}

Therefore, for $i \notin \Gamma$, using \eqref{eq:proved_ni} and \eqref{eq:proved_message}, we get
\begin{align}
    m_{i \rightarrow j}(x_j) &\propto \sum_{x_i} \psi_{ij}(x_i,x_j) \frac{\prod_{k \in N(i)} m_{k \rightarrow i} (x_i) }{m_{j \rightarrow i} (x_i)} \nonumber \\
    &\propto \sum_{x_i} \psi_{ij}(x_i,x_j) \prod_{k \in N(i) \backslash j} m_{k \rightarrow i} (x_i),
\end{align}
which is \eqref{eq:is_bp_MOT1}.

Furthermore, for $i \in \Gamma$, combining \eqref{eq:proved_message} and observation constraint \eqref{eq:MOT_bethe_b}, we arrive at
\begin{align}
    m_{i \rightarrow j}(x_j) \propto \sum_{x_i} \psi_{ij}(x_i,x_j) \frac{y_i(x_i) }{m_{j \rightarrow i} (x_i)},
\end{align}
which is \eqref{eq:is_bp_MOT2}. This completes the proof.

\end{document}